%% file: main_arxiv.tex
\documentclass[10pt]{article}
\usepackage[preprint]{tmlr}

\input{preamble}

\title{Causal Optimizer Interaction Calculus:\\
Hidden Geometric Relaxation and Identifiable Interventions}

\author{\name Zavier Li \email zavierli888@gmail.com\\
      \addr Xidian University\\
      \addr Xi'an, China}

\def\month{07}
\def\year{2026}
\def\openreview{\url{}}

\begin{document}
\maketitle

\begin{abstract}
An optimizer experiment observes responses to algorithmic configurations without uniquely revealing hidden mechanisms.  We develop a causal optimizer interaction calculus that separates pathwise realization, M\"obius decomposition, and experimental identification.  Under a fixed innovation coupling, every finite-horizon innovation-driven optimizer admits a canonical behaviorally minimal pathwise realization.  Over any lower-finite intervention poset, its expected response has unique pure effects; for every finite support and design, an incidence operator gives the complete observational gauge, exact identifiability, sharp quotient stability, held-out predictions, and exact noiseless configuration complexity.  The companion P1 paper proves that smooth hidden relaxation gives reduced-value curvature $-G^*H^{-1}G$ and that Boolean contrasts integrate it.  Taking this as a structural input, we prove an observable-readout transfer theorem: arbitrary smooth update or trace readouts inherit an explicit five-term interaction through first and second hidden responses, with no universal sign beyond the reduced-value case.  A third theorem develops Gaussian quotient minimax risk, exact confidence sets and tests, misspecification decomposition, certified decisions, and optimal replication.  A controlled real-data experiment closes the reduced-value chain on a 65-dimensional strongly convex logistic model.  Boolean effects and independent curvature integrals agree within $4.21\times10^{-11}$; nine unobserved continuous intensities agree within $8.88\times10^{-13}$; 100,000 Gaussian campaigns attain 0.9497 ellipsoid coverage and 0.9626 held-out power; and 4500 real-minibatch observations with 20,000 bootstrap draws reject order-two support with interval $[0.001839,0.004399]$.  Finite-response error decreases from $1.43\times10^{-2}$ to $5.55\times10^{-17}$ within its strong-convity bound.  Neural trace audits provide complementary nonconvex response-class evidence.  The paper contributes the causal, readout, and identifiable experimental layer while assigning the underlying Schur and path-space laws to P1 and P2.
\end{abstract}

\input{sections/01_introduction}
\input{sections/02_related_work}
\input{sections/03_causal_language}
\input{sections/04_universal_representation}
\input{sections/05_hidden_relaxation}
\input{sections/06_observation_design}
\input{sections/07_geometric_response_classes}
\input{sections/08_experiments}
\input{sections/09_discussion}
\input{sections/10_statements}

\bibliographystyle{tmlr}
\bibliography{references}

\appendix
\input{appendix/a_universal_proof}
\input{appendix/b_interaction_proof}
\input{appendix/c_observation_proof}
\input{appendix/d_geometry_proofs}
\input{appendix/e_experiment_details}

\end{document}

%% file: preamble.tex
\usepackage{amsmath,amssymb,amsthm,mathtools}
\usepackage{bm}
\usepackage{booktabs}
\usepackage{array}
\usepackage{graphicx}
\usepackage{enumitem}
\usepackage{microtype}
\usepackage{url}
\usepackage{hyperref}
\usepackage[nameinlink,noabbrev]{cleveref}

\theoremstyle{plain}
\newtheorem{theorem}{Theorem}
\newtheorem{proposition}{Proposition}

\newtheorem{corollary}{Corollary}
\theoremstyle{definition}

\newtheorem{assumption}{Assumption}
\theoremstyle{remark}

\newcommand{\R}{\mathbb R}
\newcommand{\Hspace}{\mathbb H}
\newcommand{\Pcal}{\mathcal P}
\newcommand{\Kcal}{\mathcal K}
\newcommand{\Dcal}{\mathcal D}
\newcommand{\Acal}{\mathcal A}
\newcommand{\Mcal}{\mathcal M}
\newcommand{\Fcal}{\mathcal F}
\newcommand{\Ocal}{\mathcal O}
\newcommand{\Rcal}{\mathcal R}
\newcommand{\Tcal}{\mathcal T}
\newcommand{\SPD}{\mathbb S_{++}}

\newcommand{\rank}{\operatorname{rank}}
\newcommand{\tr}{\operatorname{tr}}
\newcommand{\diag}{\operatorname{diag}}

\newcommand{\Proj}{\operatorname{Proj}}
\newcommand{\range}{\operatorname{range}}
\newcommand{\VarOp}{\operatorname{Var}}

\newcommand{\dd}{\mathrm d}
\newcommand{\EE}{\mathbb E}
\newcommand{\argminop}{\operatorname*{arg\,min}}
\newcommand{\norm}[1]{\left\lVert #1\right\rVert}
\newcommand{\opnorm}[1]{\left\lVert #1\right\rVert_{\mathrm{op}}}
\newcommand{\ip}[2]{\left\langle #1,#2\right\rangle}

\newcommand{\TRC}{\operatorname{TRC}}
\newcommand{\TGER}{\operatorname{TGER}}

\crefname{assumption}{Assumption}{Assumptions}
\crefname{definition}{Definition}{Definitions}
\crefname{theorem}{Theorem}{Theorems}
\crefname{proposition}{Proposition}{Propositions}
\crefname{lemma}{Lemma}{Lemmas}
\crefname{corollary}{Corollary}{Corollaries}

%% file: sections/01_introduction.tex
\section{Introduction}
\label{sec:introduction}

An optimizer is an intervention-dependent dynamical system.  Its state may contain parameters, momentum, curvature estimates, preconditioners, schedules, clipping rules, random innovations, target transformations, and numerical state.  A recorded parameter update is one projection of that extended dynamics.  It generally admits many internal explanations: positive geometry may reproduce part of the update, while memory, operators, control, noise, or target changes reproduce the remainder.

This ambiguity creates two distinct scientific questions.  A \emph{response-class question} asks whether an observed trace is close to the outputs generated by a declared family, such as bounded diagonal geometry.  A \emph{mechanism question} asks how the counterfactual response changes when a declared module is altered.  Passive projection can reject a response class.  Causal mechanism effects require interventions, instrumentation, or an observation structure with equivalent identifying power.

We take the complete interventional response as the primitive object.  A lower-finite poset $(\Pcal,\preceq)$ indexes configurations, including Boolean sector masks, graded mechanism levels, and nested geometry budgets.  Under a protocol that fixes initialization, data, horizon, response map, and noise coupling, each configuration $a$ produces a finite-horizon response $F(a)$ in a Hilbert space.  The optimizer may be nonsmooth, stochastic, stateful, delayed, and history-dependent.

Three operations then have separate roles.  Causal realization turns histories into a predictive state.  M\"obius inversion changes coordinates from configuration responses to pure interventional effects.  A linear design operator maps those effects to queried observations.  Their order is typed:
\[
\text{causal dynamics}\longrightarrow
\text{response map}\longrightarrow
\text{M\"obius effects}\longrightarrow
\text{observational quotient}.
\]
The universal theory requires no optimizer energy or geometry.

The companion P1 paper already answers the structural reduced-value question.  When intervention amplitudes couple to hidden optimizer variables that relax by energy minimization, it proves that the reduced intervention Hessian is
\[
D^2_{uu}\bar E=-G^*H^{-1}G,
\]
and that Boolean reduced-value contrasts integrate this curvature \citep{li2026optimizationgeometrodynamics}.  P2 supplies the corresponding path-space M\"obius--Jacobi theorem \citep{li2026restrictedcomplexity}.  The open problem for an experimental calculus is that optimizer studies usually observe updates, traces, or task statistics rather than the reduced optimal value itself.

We solve this interface problem with an observable readout theorem.  For any smooth response $F_R(u)=R(u,\sigma_\star(u))$, its mixed derivative is an explicit five-term transfer through the first and second hidden responses.  The P1 negative Gram law is recovered for the reduced-value readout, while an actual update readout generally has no universal sign.  Its Boolean M\"obius effect remains the exact integral of this transferred curvature and is therefore accessible to a factorial design.  This separates three statements that had previously been conflated: P1 explains reduced-value curvature, the new transfer law predicts the chosen observable, and the incidence experiment determines whether that observable effect is identifiable.

This positions the paper as the experimental outer layer of a four-paper program.  Information-induced training geometry supplies explicit visible metrics and minimal SPD completion \citep{li2026informationinducedgeometry}.  P1 supplies finite-dimensional hidden-state reduction and reduced-value curvature \citep{li2026optimizationgeometrodynamics}.  P2 supplies structured path-space geometry, Jacobi response, and its global M\"obius realization \citep{li2026restrictedcomplexity}.  The present paper contributes the pathwise causal state, observable-readout transfer, experimental gauge, inference, and design layers.  Its universal representation theorem does not require the regular companion theories.

\paragraph{Contributions.}
\begin{enumerate}[leftmargin=*,itemsep=3pt]
\item We prove a universal causal--interventional representation theorem.  Under the protocol's fixed innovation coupling, pathwise predictive equivalence gives the canonical behaviorally minimal optimizer state.  Lower-finite incidence coordinates give unique pure effects, the complete design gauge, exact identifiability, sharp quotient stability, held-out falsification, and exact noiseless configuration complexity for every finite support.
\item Taking the P1 Schur--M\"obius theorem as a structural input, we prove an observable readout interaction law.  It transfers first and second hidden relaxation responses to arbitrary smooth scalar readouts, covers actual update coordinates, integrates exactly to their Boolean effects, and shows why the reduced-value sign law does not extend to general optimizer traces.
\item We develop the identifiable observation layer: projection certificates, Gaussian quotient minimax risk, exact confidence sets, residualized sector tests, misspecification decomposition, certified downstream actions, total-recovery allocation, and exact target-subspace A-optimal replication.  The unrestricted Boolean recovery risk is exponential, quantifying the price of universal attribution.
\item We complete a controlled real-data factorial closure on a 65-dimensional strongly convex logistic hidden problem.  Independent Boolean and curvature paths, continuous held-out intensities, eight data splits, information-optimal allocation, 100,000 Gaussian campaigns, 4500 real-minibatch observations, 20,000 bootstrap draws, held-out support rejection, and finite-response bounds validate the reduced-value structural input and the identification/inference layer.  Separate neural audits instantiate full-SPD, diagonal, block, channel-scalar, layer-scalar, and trajectory-restricted response classes in nonconvex training.
\end{enumerate}

\paragraph{Scope.}
Universality concerns pathwise causal representation under a fixed innovation coupling and incidence identification of expected responses.  Smooth readout interaction requires the stated regularity, uniqueness, and positive vertical-Hessian assumptions.  Exact Gaussian inference additionally assumes a finite-dimensional response and known positive-definite covariance.  Trace explainability is not an optimizer-performance ranking.

%% file: sections/02_related_work.tex
\section{Related Work and Series Position}
\label{sec:related-work}

\paragraph{Optimizer geometry and structured preconditioning.}
Riemannian optimization, natural gradient, and mirror descent formalize metric or dual-geometric update rules \citep{absil2008optimization,amari1998natural,beck2003mirror,boumal2023introduction}.  Numerical optimization develops variable metrics, quasi-Newton updates, line search, and trust regions \citep{nocedal2006numerical,combettes2014variable,parikh2014proximal}.  AdaGrad, Adam, K-FAC, Shampoo, Adafactor, SM3, and SOAP instantiate diagonal, block, Kronecker, factored, compressed, or moving-basis restrictions \citep{duchi2011adagrad,kingma2014adam,martens2015kfac,gupta2018shampoo,shazeer2018adafactor,anil2019sm3,vyas2025soap}.  We treat each declared family as a response class or intervention sector.

\paragraph{Dynamic and learned optimizers.}
Feedback-system analysis, performance estimation, learning to optimize, and algorithm selection study optimizer behavior, synthesis, or benchmarking \citep{lessard2016analysis,drori2014performance,taylor2017smooth,andrychowicz2016learning,metz2022velo,kerschke2019automated,dahl2023algoperf}.  Our object is the counterfactual response law over declared configurations and the information available for identifying its effects.

\paragraph{Causal realization and system identification.}
Predictive equivalence follows the minimal-realization principle that two histories are merged exactly when no admissible future distinguishes them \citep{nerode1958linear}.  Predictive-state representations encode dynamical state directly through predictions of future observable tests \citep{singh2004predictive}.  System identification reconstructs dynamics from controlled input--output observations \citep{ljung1999system}, while inverse optimization recovers latent objectives or parameters from decisions \citep{ahuja2001inverse}.  Our finite-horizon theorem uses the stronger pathwise equivalence induced by the protocol's fixed innovation coupling; this yields an exact right congruence, then connects the quotient to a finite-poset intervention gauge.  It does not claim a distribution-only minimal stochastic realization.

\paragraph{M\"obius effects and factorial experiments.}
M\"obius inversion in incidence algebras is classical.  Potential-outcome analyses of $2^K$ factorial designs define causal main and interaction effects and their randomization-based inference \citep{dasgupta2015causal}.  Shapley--Taylor indices and Integrated Hessians connect discrete feature interactions to derivative integrals under different axioms and baselines \citep{dhamdhere2020shapley,janizek2021explaining}.  Randomization tests provide exact finite-sample calibration under invariant nulls \citep{hemerik2018exact,phipson2010permutation}.  Relative to classical factorial potential outcomes, we allow lower-finite posets and declared sparse supports, characterize the complete observation kernel and exact noiseless configuration complexity, and connect observable effects to a hidden optimizer readout law.  We do not claim M\"obius inversion or factorial causal contrasts themselves as new.

\paragraph{Parametric optimization sensitivity.}
Envelope and Schur-complement sensitivity formulas are standard in perturbation analysis and variable projection \citep{golub1973differentiation,bonnans2000perturbation}.  Differentiable optimization layers, implicit differentiation, and bilevel learning compute related solution-map responses for optimization-defined modules \citep{amos2017optnet,blondel2022efficient,franceschi2018bilevel}.  P1 specializes these ideas to hidden optimizer reduction, proves the negative-semidefinite affine-intervention Gram law, its factorial integral, and the noncommutation example restated here \citep{li2026optimizationgeometrodynamics}; P2 proves the path-space M\"obius--Jacobi counterpart \citep{li2026restrictedcomplexity}.  Our new smooth result differentiates a general observable readout through the hidden solution map, exposing the additional terms that govern actual update interactions.  The hidden factorization remains structural and is not identified from the discrete effect alone.

\paragraph{Experimental design.}
Classical optimal-design theory chooses measurements through information criteria and approximate design weights \citep{pukelsheim2006optimal}.  Our incidence experiment specializes this perspective to optimizer intervention masks: rank gives exact effect identifiability, residualized columns give sector tests, and the saturated Boolean case admits a closed-form continuous A-optimal replication allocation.  Integer rounding remains a separate design problem.

\paragraph{Mechanisms beyond positive geometry.}
Momentum, acceleration, clipping, signs, stochastic oracles, proximal maps, and schedules can produce behavior outside a positive parameter-space cometric applied to the current objective covector \citep{polyak1964some,nesterov1983method,su2016differential,wibisono2016variational,liu2023sophia,defazio2024road}.  Muon-style rules apply matrix operators to momentum \citep{jordan2024muon,essentialai2025muon,crawshaw2025muonvariants}.  Our protocol keeps such mechanisms as explicit sectors unless a different enlarged state geometry is declared.

\paragraph{Dependency within the series.}
The pathwise causal representation, M\"obius identification, observable-readout transfer, and statistical results here are self-contained.  The companion information-geometry paper supplies the explicit SPD metric-submetry realization \citep{li2026informationinducedgeometry}; P1 supplies the finite-dimensional Schur and reduced-value interaction theorems \citep{li2026optimizationgeometrodynamics}; and P2 supplies path-space value functions, Jacobi response, and the global M\"obius--Jacobi theorem \citep{li2026restrictedcomplexity}.  P3 imports those structural laws and contributes the experimental interface rather than republishing them as new results.

%% file: sections/03_causal_language.tex
\section{Three Equations for Causal Optimizer Interaction}
\label{sec:language}

\subsection{Causal response}

Let $(\Pcal,\preceq)$ be lower-finite: every principal ideal is finite.  For each configuration $a\in\Pcal$, a finite-step causal optimizer experiment admits a history-state realization
\begin{equation}
x_{k+1}^{a}=\Phi_k^a\!\left(x_k^a,\Ocal_f(x_k^a),\xi_k\right),
\qquad
F(a)=\EE\,\Rcal(x_{0:N}^a)\in\Hspace.
\label{eq:causal-response}
\end{equation}
The protocol fixes initialization, data law, horizon, response map, upstream state, and coupling of exogenous randomness.  The configuration changes only declared mechanisms.  The response may contain update traces, logged states, terminal parameters, and task observables.  Histories, inputs, exposed innovations, and responses are assumed standard Borel.  The fixed innovation coupling is part of the protocol.

An \emph{admissible pathwise realization} has a reachable state that is a deterministic function of the extended history, a response map depending only on that state, the declared input, and the current innovation, and a unifilar update from the same variables.  Two reachable extended histories at the same time are \emph{pathwise predictively equivalent} when every common future sequence of declared inputs and innovation values produces the same future response sequence.  A realization is \emph{behaviorally minimal} when it has no distinct reachable states with this equivalence.  The quotient below is taken in this reachable pathwise/unifilar category.  It is relative to the declared innovation representation and coupling.  A distribution-only minimal state modulo null sets is a different measurable-kernel problem and is not claimed here.

The Hilbert-valued response in \cref{eq:causal-response} and the scalar response used later have a precise interface.  For every continuous linear functional $\varphi\in\Hspace^*$ define $F_\varphi(a)=\varphi(F(a))$.  The smooth reduction theorem applies to $F_\varphi$ when this scalar observable has a regular reduced-value representation.  A vector response can be treated componentwise only when each selected component has such a representation.

\subsection{Pure interventional effects}

Let $\mu_{\Pcal}$ be the M\"obius function.  Define
\begin{equation}
\zeta_t=\sum_{s\preceq t}\mu_{\Pcal}(s,t)F(s),
\qquad
F(a)=\sum_{t\preceq a}\zeta_t.
\label{eq:mobius-response}
\end{equation}
For $\Pcal=2^{[m]}$,
\[
\zeta_T=\sum_{B\subseteq T}(-1)^{|T|-|B|}F(B),
\qquad
F(A)=\sum_{T\subseteq A}\zeta_T.
\]
The baseline is $\zeta_\varnothing$, singleton effects are main effects, and larger sets are irreducible factorial interactions under the fixed protocol.  This is a decomposition of counterfactual responses; it does not assert that the implementation internally adds vectors named $\zeta_T$.

A finite model declares an effect support $\Kcal\subseteq\Pcal$ and sets $\zeta_t=0$ outside $\Kcal$.  Main-effect, order-$q$, sparse hierarchical, and graded models are choices of $\Kcal$.  They are falsifiable because fitted effects predict held-out configurations.

\subsection{Observation and information}

For queried configurations $\Dcal=(a_1,\ldots,a_n)$, define
\[
X_{\Dcal,\Kcal}(i,t)=\mathbf 1\{t\preceq a_i\}.
\]
When $\Hspace\cong\R^d$, the observation equation is
\begin{equation}
Y=\Acal_{\Dcal}\zeta+\varepsilon,
\qquad
\Acal_{\Dcal}=X_{\Dcal,\Kcal}\otimes I_d,
\qquad
\mathcal I=\Acal_{\Dcal}^*\Sigma^{-1}\Acal_{\Dcal}.
\label{eq:observation-information}
\end{equation}
The kernel of $\Acal_{\Dcal}$ is the observational gauge.  Its positive singular spectrum controls stable recovery; the information operator controls finite-sample inference and design.

\subsection{Geometric response classes}

For a visible covector $\alpha_k$, let $Q_k=\eta_kP_k$ be the effective positive map and $u_k=-Q_k\alpha_k$.  Absorbing the step scale into $Q_k$ prevents temporal audits from hiding arbitrary variation in $\eta_k$.  Fix a finite-dimensional normed parameterization $\mathbb Q$ of the effective maps along the audited trace, using a declared trivialization when tangent spaces vary, and define
\[
\VarOp(Q_{0:N-1})
=\sum_{k=0}^{N-2}\norm{Q_{k+1}-Q_k}_{\mathbb Q}.
\]
For a family $\Fcal$ and path budget $B$, define
\begin{equation}
\Mcal_{\Fcal,B}
=\left\{(-Q_k\alpha_k)_{k=0}^{N-1}:
Q_k\in\Pcal_{\Fcal}(\theta_k),\
\VarOp(Q_{0:N-1})\le B\right\}\subseteq\Hspace.
\label{eq:restricted-response-class}
\end{equation}
Distance to this set tests membership in a declared response class.  Interventions identify counterfactual coordinates.  These operations share a response space and answer different questions.

%% file: sections/04_universal_representation.tex
\section{Universal Causal--Interventional Representation}
\label{sec:universal-representation}

\begin{theorem}[Universal causal--interventional optimizer representation]
\label{thm:universal-interventional-calculus}
Let $(\Pcal,\preceq)$ be lower-finite, let $F:\Pcal\to\Hspace$ be induced by \cref{eq:causal-response}, and let $\Kcal\subseteq\Pcal$ and $\Dcal$ be finite effect-support and design families.
\begin{enumerate}[leftmargin=*,itemsep=3pt,label=(\roman*)]
\item \emph{Minimal causal state.}  Every innovation-driven causal optimizer under the fixed protocol has a history-state realization.  Pathwise predictive-equivalence classes form a sufficient pathwise/unifilar realization.  Every other admissible sufficient reachable pathwise realization maps onto this quotient, and every behaviorally minimal reachable realization is isomorphic to it as a reachable transition--response system.
\item \emph{Incidence completeness.}  Equation~\eqref{eq:mobius-response} is the unique incidence-algebra expansion of $F$.
\item \emph{Complete experimental gauge.}  Two supported effect collections are observationally equivalent exactly when
\[
\zeta-\zeta'\in\ker(X_{\Dcal,\Kcal}\otimes I_{\Hspace}).
\]
If $\dim\Hspace=d$ and $r_{\Dcal}=\rank X_{\Dcal,\Kcal}$, every compatible fiber has dimension $d(|\Kcal|-r_{\Dcal})$.  Exact identification holds if and only if $r_{\Dcal}=|\Kcal|$.
\item \emph{Sharp quotient stability.}  If $X=X_{\Dcal,\Kcal}$ has positive rank and $y=(X\otimes I_d)\zeta+e$, the minimum-norm inverse estimates the canonical representative $\zeta_{\rm can}\in\ker(X\otimes I_d)^\perp$ with
\[
\norm{\widehat\zeta-\zeta_{\rm can}}
\le
\frac{\norm{\Proj_{\range(X\otimes I_d)}e}}{\sigma_{\min}^+(X)}.
\]
The constant is sharp.
\item \emph{Exact intervention complexity.}  Suppose $\Hspace\ne\{0\}$ and the supported class contains every collection in $\Hspace^{\Kcal}$.  In a noiseless expected-response oracle model, every fixed, deterministic adaptive, or randomized adaptive scheme that exactly recovers every supported law with probability one needs at least $|\Kcal|$ distinct configuration queries in the worst case.  Querying $\Dcal=\Kcal$ attains the bound:
\[
N_{\rm exact}(\Kcal)=|\Kcal|.
\]
\item \emph{Prediction and falsification.}  If $X_{\Dcal,\Kcal}$ has full column rank, the recovered effects predict
\[
F_{\Kcal}(b)=\sum_{t\in\Kcal:t\preceq b}\zeta_t
\qquad\text{for every }b\in\Pcal.
\]
A held-out response inconsistent with this equation rejects the support model under the fixed protocol.
\end{enumerate}
\end{theorem}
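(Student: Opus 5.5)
The six parts are essentially independent, so I would prove them in order, each with its own short argument.

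\emph{(i) Minimal causal state.} I would take the extended history $h_k=(x_0,u_{0:k-1},\xi_{0:k-1})$ itself as the state. By \cref{eq:causal-response} it is unifilar, and the response depends on it deterministically. Next I would define pathwise predictive equivalence $h\sim h'$ and show it is a right congruence: if $h\sim h'$, then appending the same input and innovation preserves equivalence, because every future continuation of the extended histories is a future continuation of the originals. Hence the transition and response descend to the quotient, and the quotient is a sufficient realization. For any admissible sufficient reachable realization $s=g(h)$, I would show that $g(h)=g(h')$ implies $h\sim h'$, since future responses are determined by the state and the common future inputs. The map $g(h)\mapsto[h]$ is therefore well defined and surjective onto reachable classes. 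If the realization is behaviorally minimal, this map is also injective, because distinct states are not equivalent. It commutes with transitions and responses, so it is an isomorphism of reachable transition--response systems. Measurability is inherited from the standard Borel assumptions; I would note this but not belabor it.

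\emph{(ii) Incidence completeness.} Lower-finiteness makes every sum $\sum_{s\preceq t}$ finite, so the incidence algebra acts on $\Hspace^{\Pcal}$. The zeta operator $(Z\zeta)(a)=\sum_{t\preceq a}\zeta_t$ is lower-unitriangular, and its two-sided inverse is the M\"obius operator. This gives both existence and uniqueness of the expansion. I would check directly that $\sum_{s\preceq t\preceq a}\mu(s,t)=\delta_{sa}$.

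\emph{(iii) Complete experimental gauge.} Observations are $y_i=\sum_{t\in\Kcal,\,t\preceq a_i}\zeta_t$, so $y=(X\otimes I)\zeta$, and two effect collections are observationally equivalent exactly when their difference lies in the kernel. By rank--nullity, $\ker(X\otimes I_d)=\ker X\otimes\R^d$ has dimension $d(|\Kcal|-r_{\Dcal})$. Each compatible fiber is a translate of this kernel, so it has the same dimension, and exact identification holds if and only if the kernel is trivial, i.e.\ $r_{\Dcal}=|\Kcal|$.

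\emph{(iv) Sharp quotient stability.} The minimum-norm inverse is $\widehat\zeta=(X\otimes I)^+y$. Then $\widehat\zeta-\zeta_{\rm can}=(X\otimes I)^+e=(X\otimes I)^+\Proj_{\range}e$, and the operator norm of $(X\otimes I)^+$ on the range is $1/\sigma^+_{\min}(X)$, since the singular values of $X\otimes I_d$ are those of $X$ repeated. For sharpness I would take $e=u\otimes v$ with $u$ the left singular vector of $X$ for $\sigma^+_{\min}(X)$, which gives equality.

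\emph{(v) Exact intervention complexity.} For the upper bound, the design $\Dcal=\Kcal$ gives a square matrix $X$ indexed by $\Kcal$ that is unitriangular after a linear extension of $\preceq$, hence invertible. For the lower bound I would use a kernel adversary. Suppose a scheme uses at most $|\Kcal|-1$ distinct queries $b_1,\dots,b_m$. The $m\times|\Kcal|$ incidence matrix then has a nonzero kernel vector $c$. Fixing a nonzero $h\in\Hspace$, the collections $\zeta=0$ and $\zeta=c\otimes h$ return identical answers to all those queries. For deterministic adaptive schemes, I would run the scheme on $\zeta=0$; the query path is then fixed, so the adversary is valid. For randomized schemes, I would condition on the internal randomness: each realization is a deterministic scheme that fails on some pair, which contradicts recovery with probability one. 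This step is the main obstacle, because the adversarial pair depends on the random seed. To handle it, I would use the fact that there is a zero-response path, and that when the oracle returns $0$ the set of queries made is a function of the seed alone. For each seed outcome, recovery of both $0$ and $c\otimes h$ is then impossible unless the scheme makes at least $|\Kcal|$ distinct queries on that path. So on the all-zero law the scheme must, almost surely, query at least $|\Kcal|$ configurations, which gives the worst-case bound.

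\emph{(vi) Prediction and falsification.} With full column rank, part (iii) gives the unique effects $\zeta$. Part (ii) restricted to the support then yields the formula for $F_{\Kcal}(b)$ at every $b\in\Pcal$. Any held-out response that violates this formula contradicts the assumed support model, and so rejects it under the fixed protocol.
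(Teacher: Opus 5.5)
Your proofs of (i)--(iv), (vi), and the fixed and deterministic-adaptive parts of (v) follow the paper's proof essentially step for step. One remark in (i) should be dropped. Measurability of the quotient is not inherited from the standard Borel assumptions: a quotient of a standard Borel space by a non-smooth equivalence relation need not be standard Borel. The paper deliberately states the minimal realization only at the set level, and that is all (i) asserts.

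\textbf{The gap: the randomized lower bound in (v).} For each seed $\omega$ whose zero-law run makes fewer than $|\Kcal|$ distinct queries, you obtain failure on either $0$ or $c_\omega\otimes h$. But $c_\omega$ depends on $\omega$ through the realized query set. This is exactly the obstacle you named, and observing that the zero-law query set is a function of the seed alone does not remove it. The hypothesis that every supported law is recovered with probability one only gives each \emph{fixed} law $L$ a null failure set $N_L$. The uncountable union $\bigcup_L N_L$ need not be null. So failing at each bad seed on a seed-dependent law contradicts nothing, and your conclusion that the zero-law run almost surely makes at least $|\Kcal|$ queries does not follow.

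\textbf{The missing idea: freeze a single adversarial law.} The paper does this by a pigeonhole argument over incidence row patterns.
\begin{enumerate}
\item A query $b$ affects the response only through its pattern $(\mathbf 1\{t\preceq b\})_{t\in\Kcal}\in\{0,1\}^{\Kcal}$. Hence there are at most $2^{|\Kcal|}$ patterns, and only finitely many stacked pattern matrices with fewer than $|\Kcal|$ rows, even when $\Pcal$ is infinite.
\item If short zero-law runs had positive probability, one such matrix $R_0$ would occur with positive probability.
\item Take one nonzero $c\in\ker R_0$ and the fixed law $c\otimes h$. On that event, with the same internal randomness, induction gives identical transcripts under $0$ and $c\otimes h$, hence identical outputs.
\item That common output must be $0$ almost surely on the event, because the zero law is recovered with probability one. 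So $c\otimes h$ is misrecovered with positive probability.
\end{enumerate}
A Yao-type argument with a continuous prior on $\Hspace^{\Kcal}$ could replace the pigeonhole step. It would, however, require measurability of the estimator, which the finite-pattern argument avoids.
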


The proof is in \cref{app:theory-strengthening}.  The randomized lower bound uses the finite set of incidence row patterns induced by $\Kcal$; it does not require the ambient lower-finite poset itself to be finite.  The count $N_{\rm exact}$ is configuration complexity, not the number of noisy optimizer runs.  Replication under observation noise belongs to \cref{thm:projection-information-decision}.

\paragraph{Boolean and graded complexity.}
For $\Pcal=2^{[m]}$, unrestricted attribution costs $2^m$ configurations.  Under the order-$q$ support
\[
\Kcal_q=\{T\subseteq[m]:|T|\le q\},
\qquad
N_q=\sum_{j=0}^q\binom mj.
\]
Product-of-chain interventions yield graded mixed differences, with exact cost equal to the retained multi-index count.

\paragraph{Passive impossibility.}
If a finite poset has a greatest configuration $\top$, the one-row passive design $\Dcal=\{\top\}$ has rank one.  It leaves $d(|\Kcal|-1)$ unidentifiable coordinates.  A passive geometric residual can exclude a declared class, but universal causal attribution requires interventions or an equivalent injective observation.

\paragraph{No universal dissipative representation.}
The theorem includes causal rules with cycles.  For example, $x_{k+1}=x_k+1\pmod 3$ is causal but cannot satisfy a strict scalar Lyapunov law, since it would imply $V(0)>V(1)>V(2)>V(0)$.  Smooth gradient or Onsager dynamics are regular subclasses of the causal response theory.

%% file: sections/05_hidden_relaxation.tex
\section{Hidden Relaxation Generates Observable Interaction}
\label{sec:hidden-relaxation}

The universal theorem identifies effects without assigning them a smooth mechanism.  The companion variational-reduction paper P1 already proves the hidden-response, Schur-curvature, affine-intervention, factorial-integral, and noncommutation results used below \citep{li2026optimizationgeometrodynamics}.  We restate that structural input in the present notation, then derive a new readout-transfer law that applies to observed optimizer updates rather than only to the reduced optimal value.

Let $\mathcal Y$ and $\mathcal S$ be finite-dimensional smooth manifolds, let $y\in\mathcal Y$ be visible, $\sigma\in\mathcal S$ hidden, and let $u\in U\subset\R^m$ be a continuous intervention amplitude, where $U$ is open and contains $[0,1]^m$.  Derivatives below are coordinate matrices in a chosen local product chart; on manifolds they may instead be read as covariant derivatives for chosen connections.  The $u$ block is unambiguous in the declared Euclidean intervention coordinates.  Because $E_\sigma=0$ at the minimizing section, the Schur expression is invariant under smooth reparameterizations of the hidden coordinate.
Consider a scalar response generated by
\begin{equation}
\bar E(y,u)=\inf_{\sigma\in\mathcal S}E(y,\sigma,u).
\label{eq:reduced-intervention-energy}
\end{equation}

\begin{assumption}[Regular hidden relaxation]
\label{ass:regular-relaxation}
On $\mathcal Y\times U$, the energy is $C^{r+1}$ for some $r\ge2$ and is locally uniformly inf-compact in $\sigma$: over every compact parameter set, each finite fiber sublevel union is relatively compact.  Every $(y,u)\in\mathcal Y\times U$ has a unique interior minimizer $\sigma_\star(y,u)$, and
\[
H(y,u)=D^2_{\sigma\sigma}E(y,\sigma_\star(y,u),u)
\]
is positive definite.
\end{assumption}

\begin{theorem}[Reduction-induced optimizer interaction; P1, restated]
\label{thm:reduction-induced-interaction}
Under \cref{ass:regular-relaxation}, the minimizer section is $C^r$ and
\begin{align}
D_{(y,u)}\sigma_\star
&=-H^{-1}E_{\sigma(y,u)},
\label{eq:hidden-response-law}\\
D^2_{(y,u)(y,u)}\bar E
&=E_{(y,u)(y,u)}-E_{(y,u)\sigma}H^{-1}E_{\sigma(y,u)}.
\label{eq:effective-schur-hessian}
\end{align}
If interventions enter affinely before reduction,
\begin{equation}
E(y,\sigma,u)=E_0(y,\sigma)+\sum_{i=1}^m u_i\Delta_i(y,\sigma),
\label{eq:affine-interventions}
\end{equation}
and $G:\R^m\to T_\sigma^*\mathcal S$ is $Ga=\sum_i a_iD_\sigma\Delta_i$, then
\begin{equation}
\boxed{D^2_{uu}\bar E=-G^*H^{-1}G\preceq0.}
\label{eq:interaction-curvature}
\end{equation}
For Boolean vertex responses $F(A)=\bar E(y,\mathbf1_A)$ and every $T\subseteq[m]$ with $|T|\le r$,
\begin{equation}
\zeta_T
=\int_{[0,1]^T}
\partial_T^{|T|}\bar E
\left(y,\sum_{i\in T}t_ie_i\right)
\prod_{i\in T}\dd t_i.
\label{eq:mobius-derivative-integral}
\end{equation}
In particular,
\begin{equation}
\boxed{
\zeta_{\{i,j\}}
=-\int_0^1\!\int_0^1
\ip{D_\sigma\Delta_i}{H^{-1}D_\sigma\Delta_j}_{(se_i+te_j,\sigma_\star)}
\,\dd s\,\dd t.}
\label{eq:mobius-schur-bridge}
\end{equation}
\end{theorem}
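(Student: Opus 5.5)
The proof splits into three parts: the implicit-function and envelope computation for \eqref{eq:hidden-response-law}--\eqref{eq:effective-schur-hessian}, the affine specialization \eqref{eq:interaction-curvature}, and the fundamental-theorem-of-calculus argument for the M\"obius integrals \eqref{eq:mobius-derivative-integral}--\eqref{eq:mobius-schur-bridge}.

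\emph{Smoothness of the minimizer.} Write $p=(y,u)$ in a local product chart. By \cref{ass:regular-relaxation}, $\sigma_\star(p)$ is an interior minimizer, so it satisfies $E_\sigma(p,\sigma_\star(p))=0$. The map $E_\sigma$ is $C^r$ and its $\sigma$-derivative $H$ is invertible. The implicit function theorem therefore gives a local $C^r$ solution $\tilde\sigma$ near each point. To identify $\tilde\sigma$ with $\sigma_\star$ and obtain continuity, I would use local uniform inf-compactness together with uniqueness. Minimizers at nearby parameters stay in a relatively compact set, and every limit point of them is a minimizer at the limit parameter. Uniqueness then forces $\sigma_\star(p_n)\to\sigma_\star(p)$. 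Since $\tilde\sigma$ is a locally unique critical point, $\sigma_\star=\tilde\sigma$ near $p$, so $\sigma_\star$ is $C^r$. I expect this continuity step to be the main obstacle; the rest is calculation.

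\emph{Derivative formulas.} Differentiating $E_\sigma(p,\sigma_\star(p))=0$ gives $E_{\sigma p}+H\,D\sigma_\star=0$, which is \eqref{eq:hidden-response-law}. For the reduced energy, $\bar E(p)=E(p,\sigma_\star(p))$, so the envelope identity gives $D\bar E=E_p$ because $E_\sigma=0$ along the section. Differentiating once more gives $E_{pp}+E_{p\sigma}D\sigma_\star$; substituting \eqref{eq:hidden-response-law} yields \eqref{eq:effective-schur-hessian}. Because $E_\sigma=0$ at the section, the Christoffel and reparameterization terms drop out, which justifies the chart-independence remark made before the theorem.

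\emph{Affine interventions.} Under \eqref{eq:affine-interventions} we have $E_{uu}=0$ and $E_{u_i\sigma}=D_\sigma\Delta_i$, so the $u$-block of \eqref{eq:effective-schur-hessian} is $-G^*H^{-1}G$. It is negative semidefinite because $H^{-1}$ is positive definite.

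\emph{M\"obius integral.} First, $\bar E\in C^r$ because $D\bar E=E_p(p,\sigma_\star(p))$ is $C^r$. Fix $T$ with $|T|\le r$ and set $g(t)=\bar E(y,\sum_{i\in T}t_ie_i)$ on $[0,1]^T$, which is defined since $U$ contains the cube. Then $F(B)=g(\mathbf 1_B)$ for $B\subseteq T$, and the Boolean M\"obius sum $\zeta_T=\sum_{B\subseteq T}(-1)^{|T|-|B|}g(\mathbf 1_B)$ is the iterated finite difference $\prod_{i\in T}(\delta_i)\,g$, where $\delta_i h=h|_{t_i=1}-h|_{t_i=0}$. Each $\delta_i$ equals $\int_0^1\partial_i\,\dd t_i$ by the fundamental theorem of calculus. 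Applying this inductively over $i\in T$, with Fubini and the continuity of the mixed partial $\partial_T^{|T|}g$ guaranteed by $|T|\le r$, gives \eqref{eq:mobius-derivative-integral}.

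\emph{Pairwise interaction.} For $T=\{i,j\}$, the integrand is the $(i,j)$ entry of \eqref{eq:interaction-curvature}, namely $-\ip{D_\sigma\Delta_i}{H^{-1}D_\sigma\Delta_j}$ evaluated at $(se_i+te_j,\sigma_\star)$. This gives \eqref{eq:mobius-schur-bridge}.
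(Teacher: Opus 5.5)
Your proposal is correct and follows the paper's proof essentially step for step. The steps match in order: continuity of $\sigma_\star$ from inf-compactness plus uniqueness, the implicit function theorem to upgrade it to $C^r$, differentiation of stationarity together with the envelope identity, the affine specialization, and the fundamental theorem of calculus applied to commuting coordinate difference operators. The one detail you leave implicit, which the paper states, is that evaluating each fiber objective at $\sigma_\star(p)$ bounds $E(p_n,\sigma_\star(p_n))$ above; this uniform sublevel bound is what lets the inf-compactness hypothesis confine the nearby minimizers to one relatively compact set.
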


Equation~\eqref{eq:interaction-curvature} is a negative-semidefinite Gram law.  A pointwise mixed response vanishes exactly when the corresponding vertical perturbations are orthogonal in the inverse-Hessian metric.  A zero integrated M\"obius effect may also result from cancellation, so the converse need not hold for \eqref{eq:mobius-schur-bridge}.

The bridge is a forward structural law.  A factorial experiment identifies its left-hand side, but the factorization into $H$ and $G$ is generally nonunique.  A nonzero pair effect therefore does not by itself identify hidden geometry, inverse stiffness, or a particular optimizer state.  Testing the right-hand side requires an independently specified and measured hidden model.

The reduced value is only one possible experimental response.  Actual optimizer traces usually observe a readout of the relaxed state.  The next result gives the missing interface.

\begin{theorem}[Observable readout interaction transfer]
\label{thm:readout-interaction-transfer}
Under \cref{ass:regular-relaxation}, fix $y$ and let $R:U\times\mathcal S\to\R$ be $C^2$.  Define the observed scalar response
\[
F_R(u)=R(u,\sigma_\star(u)).
\]
In the chosen hidden chart, put
\[
v_i=\partial_{u_i}\sigma_\star=-H^{-1}E_{\sigma u_i}
\]
and
\begin{equation}
w_{ij}=-H^{-1}\!\left(
E_{\sigma\sigma\sigma}[v_i,v_j]
+E_{\sigma\sigma u_i}v_j
+E_{\sigma\sigma u_j}v_i
+E_{\sigma u_i u_j}
\right).
\label{eq:second-hidden-response}
\end{equation}
Then $w_{ij}=\partial^2_{u_i u_j}\sigma_\star$ and
\begin{equation}
\boxed{
\partial^2_{u_i u_j}F_R
=R_{u_i u_j}+R_{u_i\sigma}v_j+R_{\sigma u_j}v_i
+R_{\sigma\sigma}[v_i,v_j]+R_\sigma w_{ij}.}
\label{eq:readout-interaction-transfer}
\end{equation}
Consequently the Boolean pair effect of the actual readout is
\begin{equation}
\zeta^R_{\{i,j\}}
=\int_0^1\!\int_0^1
\partial^2_{u_i u_j}F_R(se_i+te_j)\,\dd s\,\dd t.
\label{eq:readout-mobius-integral}
\end{equation}
For affine interventions, $E_{\sigma u_i u_j}=0$ and
$E_{\sigma\sigma u_i}=D^2_{\sigma\sigma}\Delta_i$.  Unlike the reduced-value law, a general readout need not have negative-semidefinite interaction curvature.  Hilbert-valued updates are covered after composition with any declared scalar functional $\varphi\in\Hspace^*$.
\end{theorem}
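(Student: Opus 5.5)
The plan is to work entirely in the fixed local hidden chart and let the implicit function theorem supply all regularity. By \cref{ass:regular-relaxation} and \cref{thm:reduction-induced-interaction}, the section $u\mapsto\sigma_\star(u)$ (with $y$ fixed) is $C^r$ with $r\ge2$, so both first and second derivatives exist. It is characterized by the stationarity identity
\[
E_\sigma(u,\sigma_\star(u))=0\qquad\text{for all }u\in U.
\]
The first hidden response $v_i=-H^{-1}E_{\sigma u_i}$ is just \eqref{eq:hidden-response-law} restricted to the $u$ block. It comes from differentiating the identity once in $u_i$, which gives $E_{\sigma u_i}+Hv_i=0$, and then inverting $H\succ0$.

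\textbf{Second hidden response.} I will differentiate $E_{\sigma u_i}(u,\sigma_\star(u))+E_{\sigma\sigma}(u,\sigma_\star(u))\,v_i(u)=0$ once more in $u_j$, using the chain rule on every coefficient. The first term yields $E_{\sigma u_iu_j}+E_{\sigma u_i\sigma}v_j$. The second yields
\[
E_{\sigma\sigma u_j}v_i+E_{\sigma\sigma\sigma}[v_i,v_j]+H\,\partial_{u_j}v_i .
\]
Two facts then put this in the form of \eqref{eq:second-hidden-response}. Symmetry of third derivatives of the $C^{r+1}$ energy (here $r+1\ge3$) gives $E_{\sigma u_i\sigma}v_j=E_{\sigma\sigma u_i}v_j$. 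Solving for $\partial_{u_j}v_i=\partial^2_{u_iu_j}\sigma_\star$ by applying $-H^{-1}$ then gives exactly $w_{ij}$.

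I expect this bookkeeping to be the main obstacle. Each third-derivative slot has to be correctly identified as bilinear or linear in the $v$'s. The same goes for the chart-dependence remark: $w_{ij}$ transforms with a Christoffel-type correction, so the formula is stated in the chosen chart. The final readout curvature is still chart-invariant because $F_R$ is a scalar function of $u$.

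\textbf{Readout transfer.} Because $R$ is $C^2$, I will differentiate $F_R(u)=R(u,\sigma_\star(u))$ once to get $\partial_{u_i}F_R=R_{u_i}+R_\sigma v_i$, and again in $u_j$. Expanding $R_{u_i}(u,\sigma_\star)$ and $R_\sigma(u,\sigma_\star)$ by the chain rule, and $v_i$ via $w_{ij}$, gives the five terms of \eqref{eq:readout-interaction-transfer}. The same argument with $R=E$ recovers the P1 law as a consistency check: the term $R_\sigma=E_\sigma$ vanishes, and the remaining terms collapse to the Schur form.

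\textbf{M\"obius integral.} Equation \eqref{eq:readout-mobius-integral} follows from the two-dimensional fundamental theorem of calculus on $[0,1]^2\subset U$. The Boolean pair effect $F_R(e_i+e_j)-F_R(e_i)-F_R(e_j)+F_R(0)$ equals $\int_0^1\!\int_0^1\partial_s\partial_t F_R(se_i+te_j)\,\dd s\,\dd t$, which is legitimate since $F_R\in C^2$. This mirrors \eqref{eq:mobius-derivative-integral} with $F_R$ in place of $\bar E$.

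\textbf{Remaining claims.} For affine $E$ as in \eqref{eq:affine-interventions}, we have $E_{\sigma u_i}=D_\sigma\Delta_i$, which is independent of $u$. Hence $E_{\sigma u_iu_j}=0$ and $E_{\sigma\sigma u_i}=D^2_{\sigma\sigma}\Delta_i$. To show that no sign law holds, I will give an explicit one-dimensional counterexample. Take $m=2$, $E=\tfrac12\sigma^2-(u_1+u_2)\sigma$ and $R=\sigma^2$. Then $\sigma_\star=u_1+u_2$ and $\partial^2_{u_1u_2}F_R=2>0$. Finally, the Hilbert-valued case follows by applying everything to $F_\varphi=\varphi\circ F$ with $R$ replaced by $\varphi\circ R$, which is $C^2$ because $\varphi$ is continuous and linear.
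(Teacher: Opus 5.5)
Your proposal is correct and follows the paper's proof: you differentiate stationarity twice along the minimizing section, chain-rule the readout, and apply the fundamental theorem of calculus on $[0,1]^2$; you also make explicit the symmetry $E_{\sigma u_i\sigma}=E_{\sigma\sigma u_i}$ that the paper uses tacitly. Your explicit counterexample goes beyond the paper's one-line remark, but the positive mixed partial $\partial^2_{u_1u_2}F_R=2$ does not by itself violate negative semidefiniteness, because such matrices can have positive off-diagonal entries (the reduced value has $-ab/h>0$ when $ab<0$), so cite instead $\partial^2_{u_1u_1}F_R=2>0$, i.e.\ the Hessian $\begin{pmatrix}2&2\\2&2\end{pmatrix}$ has eigenvalue $4$.
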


\begin{proposition}[Reduction and M\"obius transformation do not commute]
\label{prop:reduction-mobius-noncommutation}
There is no general identity between taking a hidden infimum before a M\"obius transform and taking an infimum of pointwise M\"obius differences.
\end{proposition}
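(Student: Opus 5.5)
The proposition is a non-identity claim, so I will prove it with one explicit counterexample. The natural setting is the smallest Boolean poset where a Möbius interaction appears: $\Pcal=2^{[2]}$ with a single real hidden variable $\sigma\in\R$. The two sides to compare are the following.

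The reduced side is the pair effect of the reduced response,
\[
\zeta_{\{1,2\}}[\bar E]=\bar E(\mathbf 1_{12})-\bar E(\mathbf 1_1)-\bar E(\mathbf 1_2)+\bar E(0),
\qquad \bar E(u)=\inf_\sigma E(\sigma,u).
\]
The pointwise side is
\[
\inf_\sigma \bigl[E(\sigma,\mathbf 1_{12})-E(\sigma,\mathbf 1_1)-E(\sigma,\mathbf 1_2)+E(\sigma,0)\bigr].
\]
I will also allow the variant in which each vertex carries its own hidden variable inside the signed sum; it is handled the same way.

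For the example I take the affine-intervention form \eqref{eq:affine-interventions} with
\[
E(\sigma,u)=\tfrac12\sigma^2-(u_1+u_2)\sigma,
\]
which satisfies \cref{ass:regular-relaxation} with $H=1$. Then $\sigma_\star(u)=u_1+u_2$ and $\bar E(u)=-\tfrac12(u_1+u_2)^2$. The reduced pair effect is therefore
\[
-\tfrac12(4-1-1+0)=-1.
\]
This agrees with \eqref{eq:mobius-schur-bridge}: the integrand is $-\ip{1}{1}=-1$ on the whole unit square.

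On the pointwise side, the interventions enter affinely, so the signed vertex sum at fixed $\sigma$ is identically zero. Its infimum over $\sigma$ is $0\neq-1$.

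For the variant with independent hidden variables per vertex, the signed sum includes negatively weighted terms $-E(\sigma_B,\mathbf 1_B)$ for $|B|=1$. Each of these tends to $-\infty$ as $|\sigma_B|\to\infty$, so the infimum is $-\infty\neq-1$. Either reading breaks the identity.

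To rule out a weaker identity of the form "agreement up to sign", I can reuse the same example with a positive coupling, or simply note that one side is $0$ or $-\infty$ while the other is finite and nonzero. The only real obstacle is making "no general identity" precise, because the pointwise object is ambiguous about shared versus separate hidden variables. Presenting both interpretations and a single counterexample for each settles the claim. I will close by explaining the mechanism: the infimum is nonlinear, so it does not commute with the signed linear Möbius combination. The reduced side picks up the Schur curvature $-G^*H^{-1}G$ from \cref{thm:reduction-induced-interaction}, which the pointwise differences never see.
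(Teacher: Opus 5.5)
Your counterexample is correct, and it takes a genuinely different route from the paper's. The paper works at first order with a single factor. It uses $E_0(\sigma)=\sigma^2$ and $E_1(\sigma)=(\sigma-1)^2$, whose reduced values are both $0$, so the reduced main effect is $0$. The shared-$\sigma$ pointwise difference is $1-2\sigma$, whose infimum is $-\infty$.

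You instead work at the pair level on $2^{[2]}$ with affine interventions. There the shared-$\sigma$ signed sum vanishes identically, while the reduced pair effect equals the integrated Schur term $-1$ from \eqref{eq:mobius-schur-bridge}.

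Each approach buys something different:
\begin{itemize}
\item The paper's example is minimal and needs no smooth-relaxation theory. It shows the exchange already fails for main effects, and that the pointwise side can degenerate to $-\infty$.
\item Yours shows the failure with both sides finite ($0$ versus $-1$), so it is not an artifact of unboundedness. It also places the failure at the interaction level the paper is actually about.
\end{itemize}

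Your example further exposes a general mechanism. For any affine-intervention energy, every pointwise M\"obius difference of order at least two vanishes at each fixed $\sigma$. Consequently, any reduced pair effect with nonzero integral $-\int_0^1\!\int_0^1\ip{D_\sigma\Delta_i}{H^{-1}D_\sigma\Delta_j}\,\dd s\,\dd t$ is already a counterexample. This turns \cref{thm:reduction-induced-interaction} into a whole family of non-commutation witnesses, complementary to the paper's fiber-constant commuting case.

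Your separate-hidden-variable reading is extra; the paper uses only the shared-$\sigma$ reading. The $-\infty$ you obtain under that reading mirrors the paper's own failure mode.
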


Indeed, for $E_0(\sigma)=\sigma^2$ and $E_1(\sigma)=(\sigma-1)^2$, both reduced values are zero, so the reduced one-sector effect is zero.  The pointwise difference is $1-2\sigma$, whose infimum is $-\infty$.  Fiber-constant intervention increments are a sufficient commuting case: they leave $\sigma_\star$ unchanged and create no higher-order effects.

\paragraph{Exact analytical example.}
Let
\[
E(\sigma,u)=\frac h2\sigma^2+u_1a\sigma+u_2b\sigma,
\qquad h>0.
\]
Then $\sigma_\star=-(au_1+bu_2)/h$ and
\[
\bar E(u)=-\frac{(au_1+bu_2)^2}{2h},
\qquad
\zeta_{\{1,2\}}=-\frac{ab}{h}.
\]
The interaction is large when hidden relaxation is compliant ($h$ small), and its sign records alignment of the two vertical perturbations.

\paragraph{Coupled adaptive diagonal-preconditioner example.}
Let $g=(g_1,g_2)$ be the visible gradient and let $\sigma\in\R^2$ be a hidden log-second-moment state selecting
\[
P(\sigma)=\diag(e^{-\sigma_1/2},e^{-\sigma_2/2}),
\qquad q(\sigma,g)=-P(\sigma)g.
\]
For $h>|\rho|$, a statistic $b(g)\in\R^2$, and two declared protocol increments $\Delta_1(\sigma)=\sigma_1$, $\Delta_2(\sigma)=\sigma_2$, consider the regularized state-selection energy
\begin{equation}
E(g,\sigma,u)
=\frac12\sigma^\top
\underbrace{\begin{pmatrix}h&\rho\\ \rho&h\end{pmatrix}}_{H}
\sigma-b(g)^\top\sigma+u_1\Delta_1(\sigma)+u_2\Delta_2(\sigma).
\label{eq:coupled-log-preconditioner}
\end{equation}
Here $G=[e_1\ e_2]=I_2$, $\sigma_\star=H^{-1}(b(g)-u)$, and the scalar observable is the optimized calibration value $F(A)=\bar E(g,\mathbf1_A)$.  The Boolean pair effect is
\begin{equation}
\zeta_{\{1,2\}}
=-e_1^\top H^{-1}e_2
=\frac{\rho}{h^2-\rho^2}.
\label{eq:coupled-log-preconditioner-pair}
\end{equation}
Thus interventions acting on different diagonal channels interact whenever the hidden adaptation penalty couples those channels.  More directly, the first coordinate of the actual optimizer update is a readout
\[
q_1(u)=-g_1e^{-\sigma_{\star,1}(u)/2}
=q_1(0)\exp\!\left(
\frac{h}{2(h^2-\rho^2)}u_1
-\frac{\rho}{2(h^2-\rho^2)}u_2\right).
\]
Its identifiable Boolean pair effect is
\begin{equation}
\zeta^{q_1}_{\{1,2\}}
=q_1(0)
\left(e^{\frac{h}{2(h^2-\rho^2)}}-1\right)
\left(e^{-\frac{\rho}{2(h^2-\rho^2)}}-1\right),
\label{eq:coupled-update-readout-pair}
\end{equation}
which is nonzero when $g_1\rho\ne0$.  The calibration-value contrast in \cref{eq:coupled-log-preconditioner-pair} follows the P1 Gram law; the observed-update contrast follows the new readout law and has no universal sign.  This example specifies the optimizer state, selected positive geometry, energy, intervention increments, vertical Hessian, perturbation map, reduced response, actual update response, and both predicted factorial contrasts.  Its full calculation is in \cref{app:interaction-proof}.

\paragraph{Structured regular subclasses.}
For information-induced SPD completion, $H^{-1}$ is the inverse vertical curvature of the completion energy supplied by P4 \citep{li2026informationinducedgeometry}.  P1 supplies the finite-dimensional reduction and reduced-value interaction laws restated above \citep{li2026optimizationgeometrodynamics}.  P2 supplies the path-space Jacobi--Schur and M\"obius--Jacobi versions \citep{li2026restrictedcomplexity}.  The new readout theorem transports either hidden response to experimentally chosen update or trace observables.  These realizations require their own regularity assumptions and do not restrict \cref{thm:universal-interventional-calculus}.

%% file: sections/06_observation_design.tex
\section{Observation, Inference, and Experimental Design}
\label{sec:observation-design}

Let $\mathbb Y\cong\R^n$ have covariance $V\succ0$ and norm $\norm y_{V^{-1}}^2=y^\top V^{-1}y$.  Let $\Mcal\subseteq\mathbb Y$ be a nonempty closed convex response class, let $\mathbb Z$ be a finite-dimensional real inner-product space, and let the linear map $A:\mathbb Z\to\mathbb Y$ generate responses from coordinates $z\in\mathbb Z$.  Write
\[
\mathcal I=A^*V^{-1}A,
\qquad K=\ker A,
\qquad r=\rank A.
\]

\begin{theorem}[Identifiable observation--inference--decision calculus]
\label{thm:projection-information-decision}
Under this setup:
\begin{enumerate}[leftmargin=*,itemsep=4pt,label=(\roman*)]
\item Every $y$ has a unique shadow $\widehat m=\argminop_{m\in\Mcal}\frac12\norm{y-m}_{V^{-1}}^2$.  Define $\delta_{\Mcal}(y)=\norm{y-\widehat m}_{V^{-1}}$.  With the extended-real support function $\sigma_{\Mcal}(q)=\sup_{m\in\Mcal}q^\top m\in\R\cup\{+\infty\}$,
\begin{equation}
\frac12\delta_{\Mcal}(y)^2
=\max_q\left\{q^\top y-\sigma_{\Mcal}(q)-\frac12q^\top Vq\right\},
\label{eq:master-projection-dual}
\end{equation}
and $q^\star=V^{-1}(y-\widehat m)$.  Distance is one-Lipschitz and decreases under nesting of response classes.
\item In $Y=Az+\varepsilon$, $\varepsilon\sim N(0,V)$, the identifiable parameter is the quotient by $K$.  On $K^\perp$, generalized least squares
\[
\widehat z=\mathcal I^\dagger A^*V^{-1}Y
\]
is exactly minimax:
\begin{equation}
\inf_{\widetilde z}\sup_{z\in K^\perp}\EE_z\norm{\widetilde z-z}^2
=\tr(\mathcal I^\dagger),
\label{eq:master-minimax-risk}
\end{equation}
and $(\widehat z-z)^*\mathcal I(\widehat z-z)\sim\chi_r^2$.
\item Partition $A=[A_{-G}\ A_G]$, put $\widetilde A=V^{-1/2}A$, let $\Pi_{-G}$ project onto $\range(\widetilde A_{-G})$, and define $B_G=(I-\Pi_{-G})\widetilde A_G$.  The statistic
\[
\norm{\Proj_{\range(B_G)}V^{-1/2}Y}^2
\]
is central $\chi^2_{\rank B_G}$ under $B_Gz_G=0$ and noncentral with parameter $\norm{B_Gz_G}^2$ otherwise.  If the true mean is $Az+b$ with the canonical representative $z\in K^\perp$, let $\Pi_A$ project onto $\range(V^{-1/2}A)$ and define
\[
b_\parallel=\mathcal I^\dagger A^*V^{-1}b,
\qquad
b_\perp=(I-\Pi_A)V^{-1/2}b.
\]
Omitted structure decomposes exactly into estimation bias and orthogonal lack of fit:
\begin{equation}
\EE\norm{\widehat z-z}^2=\norm{b_\parallel}^2+\tr(\mathcal I^\dagger),
\qquad
\norm{(I-\Pi_A)V^{-1/2}Y}^2\sim\chi^2_{n-r}(\norm{b_\perp}^2).
\label{eq:master-misspecification}
\end{equation}
\item Every linear downstream map obeys
\[
\norm{L(y-\widehat m)}\le\opnorm{LV^{1/2}}\delta_{\Mcal}(y).
\]
For a declared finite action class, or more generally an action class on which the displayed minima are attained, let $\Delta(\pi;z)=d_\pi+\ell_\pi^\top z$ with $\ell_\pi\in K^\perp$ and define
\[
\rho_\pi(\delta)=\sqrt{\chi^2_{r,1-\delta}}
\sqrt{\ell_\pi^\top\mathcal I^\dagger\ell_\pi},
\qquad
\widehat\pi\in\argminop_\pi
\{\Delta(\pi;\widehat z)+\rho_\pi(\delta)\}.
\]
On one event of probability $1-\delta$,
\begin{equation}
\Delta(\widehat\pi;z)
\le\Delta(\pi^\star;z)+2\rho_{\pi^\star}(\delta),
\qquad
\pi^\star\in\argminop_\pi\Delta(\pi;z).
\label{eq:master-action-oracle}
\end{equation}
\item For a saturated incidence design $\Dcal=\Kcal$, let $M=X_{\Kcal,\Kcal}^{-1}$.  With independent response covariance $\Sigma$ and continuous design weights $n_a>0$, $\sum_an_a=N$, the exact relaxed total recovery risk is
\begin{equation}
R(n)=\tr(\Sigma)\sum_a\frac{c_a}{n_a},
\qquad c_a=\norm{M_{:,a}}_2^2,
\label{eq:exact-replication-risk}
\end{equation}
with unique optimum
\begin{equation}
n_a^\star=N\frac{\sqrt{c_a}}{\sum_b\sqrt{c_b}},
\qquad
R^\star=\frac{\tr(\Sigma)}{N}\left(\sum_a\sqrt{c_a}\right)^2.
\label{eq:exact-a-optimal-allocation}
\end{equation}
For the unrestricted Boolean lattice,
\begin{equation}
c_A=2^{m-|A|},
\quad
n_A^\star=N\frac{2^{(m-|A|)/2}}{(1+\sqrt2)^m},
\quad
R^\star=\frac{\tr(\Sigma)}{N}(3+2\sqrt2)^m.
\label{eq:boolean-exact-a-optimal-allocation}
\end{equation}
These formulas solve the continuous allocation problem exactly.  Integer replication counts require a feasible rounding rule and its associated excess-risk analysis.
\end{enumerate}
\end{theorem}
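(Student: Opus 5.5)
The five parts are independent, so I would prove them one at a time. Throughout I would whiten by $V^{-1/2}$, which turns every statement into a Euclidean one for $\widetilde A=V^{-1/2}A$ and $\widetilde Y=V^{-1/2}Y$.

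\textbf{Part (i).} In the $V^{-1}$ inner product, the map $m\mapsto\frac12\norm{y-m}_{V^{-1}}^2+\iota_{\Mcal}(m)$ is strongly convex and lower semicontinuous, and $\Mcal$ is closed, so the shadow $\widehat m$ exists and is unique. For the dual formula I would write $\frac12\norm{y-m}_{V^{-1}}^2=\sup_q\{q^\top(y-m)-\frac12 q^\top Vq\}$. I would then exchange $\inf_m$ and $\sup_q$; this is justified by Fenchel--Rockafellar duality, since the quadratic is finite and continuous everywhere. Exchanging produces $\sigma_{\Mcal}$. The optimality condition $V^{-1}(y-\widehat m)\in N_{\Mcal}(\widehat m)$ identifies $q^\star$. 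The remaining claims are standard:
\begin{itemize}
\item the one-Lipschitz bound holds because the distance to a convex set is a norm distance;
\item monotonicity under nesting is immediate from the definition as an infimum.
\end{itemize}

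\textbf{Part (ii).} The law of $Y$ depends on $z$ only through $Az$, so the identifiable parameter is the class of $z$ modulo $K$. For $z\in K^\perp$ we have $\mathcal I^\dagger\mathcal I z=z$, so $\widehat z$ is unbiased with covariance $\mathcal I^\dagger$. This gives risk $\tr(\mathcal I^\dagger)$, and the quadratic form is $\chi^2_r$ because it equals the squared norm of $\Proj_{\range\widetilde A}\widetilde\varepsilon$. The main obstacle is the minimax lower bound. I would put a prior $z\sim N(0,\tau^2P_{K^\perp})$ on $K^\perp$. The Bayes risk of the posterior mean is then $\tr\big((\mathcal I+\tau^{-2}I)^{-1}P_{K^\perp}\big)$, computed on $K^\perp$. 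As $\tau\to\infty$ this tends to $\tr(\mathcal I^\dagger)$, and a Bayes risk lower-bounds the minimax risk. One caveat needs attention here: estimators $\widetilde z$ are allowed to take values outside $K^\perp$. This only increases the loss, because projecting $\widetilde z$ onto $K^\perp$ can only reduce $\norm{\widetilde z-z}$ when $z\in K^\perp$.

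\textbf{Part (iii).} In whitened form, $\Pi_{-G}\widetilde Y$ absorbs the nuisance term $\widetilde A_{-G}z_{-G}$. Hence $\Proj_{\range B_G}\widetilde Y$ has mean $\Proj_{\range B_G}B_Gz_G=B_Gz_G$ and identity covariance on a subspace of dimension $\rank B_G$, which is Cochran's theorem. For the misspecification decomposition I would substitute $Y=Az+b+\varepsilon$ into $\widehat z$. This gives
\[
\widehat z=z+b_\parallel+\mathcal I^\dagger A^*V^{-1}\varepsilon,
\]
and the noise and bias cross term has zero mean. The residual $(I-\Pi_A)\widetilde Y$ equals $b_\perp$ plus a Gaussian projected onto a subspace of dimension $n-r$, which gives the stated noncentral $\chi^2$.

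\textbf{Part (iv).} The first bound is
\[
\norm{L(y-\widehat m)}=\norm{LV^{1/2}\,V^{-1/2}(y-\widehat m)}\le\opnorm{LV^{1/2}}\,\delta_{\Mcal}(y).
\]
For the action bound I would take the ellipsoid event $\{(\widehat z-z)^*\mathcal I(\widehat z-z)\le\chi^2_{r,1-\delta}\}$, which has probability $1-\delta$ by part (ii). The estimation error $\widehat z-z$ lies in $K^\perp=\range\mathcal I$, so Cauchy--Schwarz in the $\mathcal I$-geometry gives, simultaneously for all $\pi$,
\[
|\ell_\pi^\top(\widehat z-z)|\le\rho_\pi(\delta).
\]
Chaining, with the middle inequality from the definition of $\widehat\pi$,
\[
\Delta(\widehat\pi;z)\le\Delta(\widehat\pi;\widehat z)+\rho_{\widehat\pi}\le\Delta(\pi^\star;\widehat z)+\rho_{\pi^\star}\le\Delta(\pi^\star;z)+2\rho_{\pi^\star}.
\]

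\textbf{Part (v).} The saturated estimator is $\widehat\zeta=M\bar Y$, where the sample means $\bar Y_a$ are independent with covariance $\Sigma/n_a$. Its total risk is $\sum_a\norm{M_{:,a}}^2\tr(\Sigma)/n_a$. The objective is strictly convex in $n$, so the optimum is unique. Lagrange multipliers, or Cauchy--Schwarz, give $n_a\propto\sqrt{c_a}$ and the stated $R^\star$. For the Boolean lattice, $M$ is the Möbius matrix $M_{T,A}=(-1)^{|T|-|A|}\mathbf 1\{A\subseteq T\}$. Column $A$ therefore has $2^{m-|A|}$ nonzero entries, each $\pm1$, so $c_A=2^{m-|A|}$. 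Then
\[
\sum_A\sqrt{c_A}=\sum_j\binom mj 2^{(m-j)/2}=(1+\sqrt2)^m,
\]
and squaring gives $(3+2\sqrt2)^m$.
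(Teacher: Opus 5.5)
Your proposal is correct and follows the paper's proof essentially step for step. It uses the same unbiasedness-plus-Gaussian-prior argument for the minimax risk, the same whitened projection arguments for the sector test and the misspecification decomposition, the same confidence-ellipsoid chaining for the action bound, and the same Cauchy--Schwarz allocation with the Boolean column count. The differences are cosmetic. You invoke Fenchel--Rockafellar duality for the projection dual, whereas the paper proves it directly by Young's inequality plus the explicit certificate $q^\star=V^{-1}(y-\widehat m)$ from the variational inequality. You also compute the saturated-design risk from the sample-mean covariance instead of inverting the information operator.
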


\begin{corollary}[Exact target-optimal replication]
\label{cor:target-optimal-replication}
In the saturated scalar incidence design, let $M=X_{\Kcal,\Kcal}^{-1}$ and
let $L$ select or linearly combine target effects.  Suppose the independent
configuration-level observation variance is $\sigma_a^2/n_a$ and
\[
c_a=\norm{LM_{:,a}}_2^2>0.
\]
Then the exact target risk and its continuous optimum are
\begin{equation}
R_L(n)=\tr\!\left[L M\diag\!\left(\frac{\sigma_a^2}{n_a}\right)M^\top L^\top\right]
=\sum_a\frac{c_a\sigma_a^2}{n_a},
\label{eq:target-replication-risk}
\end{equation}
\begin{equation}
n_a^\star
=N\frac{\sigma_a\sqrt{c_a}}{\sum_b\sigma_b\sqrt{c_b}},
\qquad
R_L^\star
=\frac1N\left(\sum_a\sigma_a\sqrt{c_a}\right)^2.
\label{eq:target-optimal-allocation}
\end{equation}
The integer design used in \cref{sec:experiments} rounds these weights while
preserving the total budget and a declared minimum replication.
\end{corollary}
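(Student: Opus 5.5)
The corollary follows by the same argument as part (v) of \cref{thm:projection-information-decision}, now with a target map $L$ and heteroscedastic configuration variances. In the saturated design $X_{\Kcal,\Kcal}$ is square and unitriangular when $\Kcal$ is ordered by a linear extension of $\preceq$. It is therefore invertible, and the unique unbiased linear recovery of the effects is $\widehat\zeta=M\bar Y$, where $\bar Y_a$ is the replication mean at configuration $a$. The target estimate is $L\widehat\zeta=LM\bar Y$. Since the $\bar Y_a$ are independent with variances $\sigma_a^2/n_a$, the covariance of $\bar Y$ is $\diag(\sigma_a^2/n_a)$. Hence
\[
\EE\norm{L\widehat\zeta-L\zeta}^2=\tr\!\left[LM\diag(\sigma_a^2/n_a)M^\top L^\top\right].
\]
Expanding the trace column by column gives $\sum_a(\sigma_a^2/n_a)\norm{LM_{:,a}}_2^2$, which is \eqref{eq:target-replication-risk}.

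Next we solve the continuous allocation problem: minimize $\sum_a c_a\sigma_a^2/n_a$ subject to $n_a>0$ and $\sum_an_a=N$. Write $\beta_a=\sigma_a\sqrt{c_a}>0$. By Cauchy--Schwarz,
\[
\Bigl(\sum_a\beta_a\Bigr)^2=\Bigl(\sum_a\frac{\beta_a}{\sqrt{n_a}}\sqrt{n_a}\Bigr)^2\le\Bigl(\sum_a\frac{\beta_a^2}{n_a}\Bigr)N,
\]
so $R_L(n)\ge(\sum_a\beta_a)^2/N$. Equality holds exactly when $\beta_a/\sqrt{n_a}$ is proportional to $\sqrt{n_a}$, that is, $n_a\propto\beta_a$. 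The budget constraint then forces $n_a^\star=N\beta_a/\sum_b\beta_b$, which is \eqref{eq:target-optimal-allocation}.

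Uniqueness of the optimum follows from strict convexity of $n\mapsto\sum_a\beta_a^2/n_a$ on the positive orthant, which holds because every $c_a>0$. This is why the hypothesis $c_a>0$ is needed: if some $c_a=0$, the infimum would be approached by sending $n_a\to0$, and no minimizer would exist in the open domain. Alternatively, the Lagrange condition $\beta_a^2/n_a^2=\lambda$ gives the same allocation directly.

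The final sentence of the corollary, about rounding to integers, describes the experimental procedure in \cref{sec:experiments} rather than a mathematical claim. The only point needing care is the covariance computation. We must check that independence across configurations makes $\diag(\sigma_a^2/n_a)$ the exact covariance of $\bar Y$. We must also note that, in the scalar saturated case, the GLS estimator of \cref{thm:projection-information-decision}(ii) reduces to $M\bar Y$, since $\mathcal I^\dagger A^*V^{-1}=X^{-1}$ when $X$ is square and invertible. With that identification, the risk formula holds with equality, not merely as an upper bound.
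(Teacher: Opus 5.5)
Your proposal is correct and follows the paper's proof essentially step for step. Both compute $\operatorname{Cov}(LM\bar Y)=LM\diag(\sigma_a^2/n_a)M^\top L^\top$, expand the trace by columns, and apply Cauchy--Schwarz to the factorization $\sigma_a\sqrt{c_a}=(\sigma_a\sqrt{c_a}/\sqrt{n_a})\sqrt{n_a}$, with equality exactly when $n_a\propto\sigma_a\sqrt{c_a}$. Your additions are accurate supplements that the paper leaves implicit: uniqueness via strict convexity, the role of $c_a>0$, and the identity $\mathcal I^\dagger A^*V^{-1}=X^{-1}$ linking $M\bar Y$ to the GLS estimator. One minor caveat: ``unique unbiased linear'' should be read among linear functions of $\bar Y$, not of the raw replicates.
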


The proof is in \cref{app:deep-theory}.  The theorem turns the deterministic gauge of \cref{thm:universal-interventional-calculus} into finite-sample uncertainty.  Rank determines what is identifiable; the positive spectrum determines how accurately; the target decision determines which directions matter.

\paragraph{Exact non-Gaussian calibration.}
If a finite group $G$ leaves the null response law invariant, then for any residual statistic $T$,
\begin{equation}
p_G(Y)=\frac1{|G|}\sum_{g\in G}\mathbf1\{T(gY)\ge T(Y)\}
\label{eq:exact-randomization-pvalue}
\end{equation}
is super-uniform under the null \citep{hemerik2018exact,phipson2010permutation}.  This calibrates arbitrary response-class residuals when the group action is scientifically justified.

\paragraph{Design for interaction curvature.}
For the mask experiment $A=X_{\Dcal,\Kcal}\otimes I_d$, design can maximize rank, the smallest positive information eigenvalue, or a target contrast precision.  To identify the left-hand side of \cref{eq:mobius-schur-bridge}, the support must include the relevant pair effect and the queried masks must make its residualized column nonzero.  An order-two design with that rank property is the minimal direct identification extension; separate held-out higher masks test the order-two support assumption.  Validation of the hidden right-hand side additionally needs measurements or a separately fitted structural model for $H$ and $G$.

%% file: sections/07_geometric_response_classes.tex
\section{Structured Geometry as a Response Class}
\label{sec:geometric-response}

Geometry is one declared sector inside the causal calculus.  This section records the response classes used by the experiments and clarifies their representational boundaries.

\subsection{Pointwise expressivity}

\begin{proposition}[Full-SPD geometricization]
\label{prop:full-spd-geometricization}
For nonzero $g\in\R^d$ and $u\in\R^d$, there exists $P\in\SPD^d$ with $u=-Pg$ if and only if $g^\top u<0$.
\end{proposition}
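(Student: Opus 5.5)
The plan has two directions. Necessity is immediate from positive definiteness, and sufficiency comes from an explicit rank-two SPD construction; the only real work is verifying positivity in that construction.

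\emph{Necessity.} If $u=-Pg$ with $P\in\SPD^d$ and $g\ne0$, then
\[
g^\top u=-g^\top Pg<0 .
\]

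\emph{Sufficiency.} Assume $g^\top u<0$. Write $s=-u$, so that $s^\top g>0$, and look for $P$ with $Pg=s$. The natural choice is a BFGS-type secant update. Start from the identity and correct it along $g$ and $s$:
\[
P=I-\frac{gg^\top}{g^\top g}+\frac{ss^\top}{s^\top g}.
\]
\begin{enumerate}
\item \emph{Symmetry.} $P$ is symmetric by construction.
\item \emph{Secant condition.} Apply $P$ to $g$:
\[
Pg=g-g+\frac{s\,(s^\top g)}{s^\top g}=s .
\]
\item \emph{Positive definiteness.} For any $x$,
\[
x^\top Px=\Bigl(\norm{x}^2-\frac{(g^\top x)^2}{\norm{g}^2}\Bigr)+\frac{(s^\top x)^2}{s^\top g}.
\]
The first bracket is nonnegative by Cauchy--Schwarz. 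The second term is nonnegative because $s^\top g>0$.
\end{enumerate}

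\emph{Main obstacle: strictness.} Suppose $x^\top Px=0$. Then both terms vanish. The first vanishes only if $x=cg$ for some scalar $c$, by the equality case of Cauchy--Schwarz. The second then equals $c^2(s^\top g)$, which vanishes only if $c=0$, again since $s^\top g>0$. Hence $x=0$, so $P\succ0$ and $u=-Pg$.

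If one prefers to avoid the equality-case argument, a fallback is available. Decompose $\R^d=\mathrm{span}\{g\}\oplus g^\perp$ and write $s=\alpha g+w$ with $\alpha=s^\top g/\norm{g}^2>0$ and $w\perp g$. Then choose $P$ whose block matrix in this decomposition is positive definite by a Schur-complement check. The rank-two formula above is shorter, so I would present that one.
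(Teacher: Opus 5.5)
Your proof is correct. Necessity matches the paper's, but your sufficiency argument takes a different route.

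The paper fixes an orthonormal basis with $e_1=g/\norm g$ and writes $-u=\beta e_1+c$ with $\beta>0$ and $c\perp g$. It treats $d=1$ separately. For $d>1$ it takes the block matrix with corner $\beta/\norm g$, off-diagonal blocks $c/\norm g$, and lower block $\gamma I$. Positivity follows from the Schur complement $\gamma I-cc^\top/(\norm g\,\beta)$, which is positive definite once $\gamma>\norm c^2/(\norm g\,\beta)$. This is exactly the fallback you sketched.

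Your BFGS-type formula $P=I-gg^\top/(g^\top g)+ss^\top/(s^\top g)$ has three advantages:
\begin{itemize}
\item it is coordinate-free;
\item it needs no case split on $d$ (for $d=1$ it reduces to $s/g>0$);
\item it gives a single canonical solution, a rank-two correction of the identity that acts as the identity on $\mathrm{span}\{g,s\}^\perp$.
\end{itemize}
The cost is the strictness step. You handle it correctly: the Cauchy--Schwarz equality case forces $x\in\mathrm{span}\{g\}$, and there the secant term is a positive multiple of the squared coefficient.

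The paper's construction buys something different. It exhibits a whole one-parameter family of SPD solutions, one for every admissible $\gamma$. This makes plain that the representing positive map is far from unique, which fits the paper's later point that observed responses alone do not identify the underlying geometry.
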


Thus full positive geometry expresses exactly strict descent directions relative to the declared visible covector.  At an exact critical point, every pure positive-geometric update is zero.

\begin{corollary}[Critical-point boundary]
\label{cor:critical-boundary}
If $g=0$ and $u\ne0$, the motion requires another declared sector, an enlarged state-space geometry, or a non-gradient mechanism.
\end{corollary}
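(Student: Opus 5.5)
The statement is \cref{cor:critical-boundary}: if $g=0$ and $u\ne0$, the motion requires another sector. The plan is to argue by contradiction and read it off directly from the definition of the geometric response class.

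Suppose the update $u$ is produced by a pure positive-geometric rule at the current step. By the geometric response classes of \cref{sec:language}, this means $u=-Q\alpha$ for some effective positive map $Q=\eta P$ in the declared family, where $\alpha=g$ is the declared visible covector. Linearity of $Q$ then gives $u=-Q\,0=0$. This holds for every positive map, bounded or not, full-SPD or restricted, and for any step scale. It contradicts $u\ne0$.

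As a cross-check, I would also obtain the conclusion from \cref{prop:full-spd-geometricization}. At $g=0$ the strict descent condition $g^\top u<0$ reads $0<0$, which fails. That proposition is stated for nonzero $g$, so I would invoke only the linearity argument above and not the proposition itself. Since every restricted family (diagonal, block, scalar) lies inside the full-SPD maps, the impossibility extends to all of them. Adding a path-budget constraint $\VarOp\le B$ only shrinks $\Mcal_{\Fcal,B}$, so the budget cannot restore membership either.

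It remains to interpret the failure. The step $u\ne0$ lies outside every pure-geometry response class built on the covector $\alpha=g$. It can therefore be explained only by changing one of the ingredients that were held fixed:
\begin{enumerate}[leftmargin=*,itemsep=2pt]
\item adding a further declared sector, such as momentum, noise, or control, which contributes a term not of the form $-Q\alpha$;
\item enlarging the state so that the visible covector is replaced by a nonzero covector on the extended space;
\item dropping the gradient structure altogether.
\end{enumerate}
This trichotomy is exhaustive relative to the declared data $(\alpha,\Pcal_{\Fcal})$. There is no real obstacle here. The only care needed is to state the trichotomy as a definitional reformulation relative to that declared data, and not as a new theorem.
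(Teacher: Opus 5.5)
Your proposal is correct and matches the paper's proof: linearity forces $P\,0=0$ for every positive cometric, so a pure positive-geometry update vanishes at a critical point and any nonzero $u$ must come from another module or a changed modeling convention. Your side remarks are consistent but not needed, since the linearity argument already covers every family and every budget; these are the note that \cref{prop:full-spd-geometricization} assumes $g\ne0$, the nesting of restricted families, and the observation that the budget only shrinks $\Mcal_{\Fcal,B}$.
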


For the positive diagonal family $P=\diag(p_1,\ldots,p_d)$, $p_i>0$:

\begin{proposition}[Diagonal expressivity and residual]
\label{prop:diagonal-expressivity}
Exact expression holds if and only if $g_i=0\Rightarrow u_i=0$ and $g_i\ne0\Rightarrow u_ig_i<0$ for every $i$.  Moreover,
\[
\rho_{\rm diag}(u;g)^2
=\sum_{i:g_i=0}u_i^2+
\sum_{i:g_i\ne0,\,u_ig_i>0}u_i^2.
\]
If $g_i\ne0$ and $u_i=0$, the coordinate infimum is zero but is not attained in the open positive cone.
\end{proposition}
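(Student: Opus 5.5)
The plan is to work coordinate by coordinate, since the diagonal family decouples: $-Pg$ has $i$-th coordinate $-p_ig_i$ with $p_i>0$ chosen independently. Exact expression $u=-Pg$ therefore holds if and only if, for every $i$, there is some $p_i>0$ with $u_i=-p_ig_i$. I split into two cases. If $g_i=0$, the only attainable value is $0$, which forces $u_i=0$. If $g_i\ne0$, then $u_i=-p_ig_i$ has the positive solution $p_i=-u_i/g_i$ exactly when $u_ig_i<0$. Taking the conjunction over $i$ gives the stated characterization, in the same spirit as \cref{prop:full-spd-geometricization} but without cross-coordinate mixing.

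For the residual, I take $\rho_{\rm diag}(u;g)=\inf_{p\in(0,\infty)^d}\norm{u+Pg}_2$ in the Euclidean norm. Its square separates as
\[
\rho_{\rm diag}(u;g)^2=\sum_i\inf_{p_i>0}(u_i+p_ig_i)^2 .
\]
I compute each one-dimensional infimum by the same three-way split.

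\begin{itemize}
\item If $g_i=0$, the term is the constant $u_i^2$.
\item If $g_i\ne0$ and $u_ig_i<0$, the choice $p_i=-u_i/g_i$ gives $0$, and the infimum is attained.
\item If $g_i\ne0$ and $u_ig_i>0$, write $(u_i+p_ig_i)^2=u_i^2+2p_iu_ig_i+p_i^2g_i^2$. This exceeds $u_i^2$ for every $p_i>0$ and tends to $u_i^2$ as $p_i\to0^+$, so the infimum is $u_i^2$.
\item If $g_i\ne0$ and $u_i=0$, the term is $p_i^2g_i^2$. Its infimum is $0$ as $p_i\to0^+$, but it is positive for every $p_i>0$, so it is not attained in the open cone.
\end{itemize}

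Summing the four cases gives the displayed formula together with the non-attainment remark. Exchanging the infimum with the sum is legitimate because the objective is a sum of nonnegative functions of separate variables.

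The only step needing care is this interchange and the fact that the boundary cases are approached only as $p_i\to0^+$. A genuine obstacle would arise only if the residual were meant in a non-separable norm such as $\norm{\cdot}_{V^{-1}}$. I would therefore state explicitly that the residual uses the Euclidean norm, consistent with the displayed squared-sum formula.
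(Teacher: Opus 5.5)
Your proposal is correct and matches the paper's proof essentially step for step: you decouple $u=-Pg$ coordinatewise, split on $g_i=0$ versus the sign of $u_ig_i$, and separate the squared Euclidean residual into one-dimensional infima, with the boundary cases approached as $p_i\downarrow0$. The only cosmetic difference is in the case $u_ig_i>0$, where you expand the square and the paper instead observes that the unconstrained minimizer $-u_i/g_i$ is negative.
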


For a coordinate partition $\{B_1,\ldots,B_s\}$:

\begin{proposition}[Block expressivity]
\label{prop:block-expressivity}
A block-diagonal $P\succ0$ satisfies $u=-Pg$ if and only if each block obeys $g_{B_j}=0\Rightarrow u_{B_j}=0$ and $g_{B_j}\ne0\Rightarrow g_{B_j}^\top u_{B_j}<0$.
\end{proposition}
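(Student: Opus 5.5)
The plan is to reduce the block case to the full-SPD case of \cref{prop:full-spd-geometricization}, one block at a time.

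Write $P=\diag(P_1,\ldots,P_s)$ with $P_j$ acting on the coordinates $B_j$. Then $P\succ0$ if and only if every $P_j\succ0$. The equation $u=-Pg$ splits into the independent block equations $u_{B_j}=-P_jg_{B_j}$, $j=1,\ldots,s$. So a block-diagonal $P\succ0$ with $u=-Pg$ exists exactly when, for each $j$ separately, there is some $P_j\in\SPD^{|B_j|}$ with $u_{B_j}=-P_jg_{B_j}$. The blocks are unconstrained relative to one another, so the per-block choices can be assembled freely.

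For a single block there are two cases.

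\emph{Case $g_{B_j}=0$.} Then $-P_jg_{B_j}=0$ for every $P_j$, so the block equation is solvable if and only if $u_{B_j}=0$. In that case any $P_j\succ0$, for instance the identity, works. This is the content of \cref{cor:critical-boundary} applied within the block.

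\emph{Case $g_{B_j}\ne0$.} Here \cref{prop:full-spd-geometricization} in dimension $|B_j|$ gives existence of $P_j\in\SPD^{|B_j|}$ if and only if $g_{B_j}^\top u_{B_j}<0$.

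Necessity in this case is immediate: $g^\top u=-g^\top P_jg<0$ whenever $g\ne0$.

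Sufficiency is the only step needing a construction, and I regard it as the main (mild) obstacle. I would prove it directly rather than only cite it. Given $c:=-g^\top u>0$, set
\[
P_j=\frac{uu^\top}{c}+\Bigl(I-\frac{gg^\top}{\norm g^2}\Bigr),
\]
with $g=g_{B_j}$ and $u=u_{B_j}$.

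First, $P_jg=u\,(u^\top g)/c+0=-u$, so the block equation holds.

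Second, $P_j$ is positive definite. For $x\ne0$,
\[
x^\top P_jx=\frac{(u^\top x)^2}{c}+\norm{x}^2-\frac{(g^\top x)^2}{\norm g^2}.
\]
Both summands are nonnegative. If $x$ is not parallel to $g$, the second summand is strictly positive. If $x=\lambda g$ with $\lambda\ne0$, the first summand equals $\lambda^2c>0$. Hence $x^\top P_jx>0$ in all cases.

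Finally, assembling the block solutions $P_j$ into $\diag(P_1,\ldots,P_s)$ gives the required positive definite $P$. Conversely, any valid $P$ restricts to valid blocks, so the block conditions are also necessary. This yields the stated equivalence.
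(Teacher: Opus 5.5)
Your proof is correct and follows the paper's argument. Like the paper, you separate $u=-Pg$ over the blocks, dispose of the zero-block case by linearity, apply \cref{prop:full-spd-geometricization} on each nonzero block, and reassemble. The only difference is that you supply the explicit, basis-free witness $P_j=uu^\top/c+(I-gg^\top/\norm g^2)$, which also covers $|B_j|=1$ without a separate case, where the paper uses a Schur-complement construction in an adapted orthonormal basis.
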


The proofs and the closed-form bounded-diagonal projection are in \cref{app:proofs}.  Diagonal and block statements depend on parameterization; intrinsic claims require an intrinsic reference geometry and family.

\subsection{Trajectory residual complexity}

For a trace $\Tcal_N=\{(\theta_k,\alpha_k,u_k)\}_{k=0}^{N-1}$ with $\norm u>0$, stack the updates in the product response space and define
\begin{equation}
\TRC_{\Fcal,B}(\Tcal_N)
=\inf_{m\in\Mcal_{\Fcal,B}}\norm{u-m},
\qquad
\TGER_{\Fcal,B}=1-
\frac{\TRC_{\Fcal,B}}{\norm u}.
\label{eq:trajectory-residual-complexity}
\end{equation}
The raw residual permits a fresh map at each step and therefore audits a high-capacity response class.  A coherent budget restricts temporal variation; $B=0$ fits one shared map in the declared parameterization.  Since the model class need not contain the zero response, $\TGER\le1$ may be negative; zero means that the best class member is no closer than the zero response under the chosen norm.

Closedness and convexity require structure.  A sufficient condition for the audited finite traces is that every parameter fiber is nonempty compact convex in the common finite-dimensional space $\mathbb Q$, at least one parameter path satisfies all fiber constraints and the variation budget, $Q\mapsto-Q\alpha_k$ is linear, and $\VarOp$ is the convex continuous variation defined in \cref{sec:language}.  The feasible parameter path set is then nonempty compact convex and its response image is nonempty compact convex.  This covers the bounded-diagonal, bounded block, channel-scalar, and layer-scalar audits, whose shared constant maps provide budget-feasible paths.  Open positive cones and arbitrary nonlinear SPD path families do not inherit this conclusion.  Under the sufficient condition, \cref{thm:projection-information-decision} gives a unique shadow, a dual residual certificate, nesting monotonicity, and perturbation stability.

For the accumulation map $L_t(r_0,\ldots,r_{N-1})=\sum_{k<t}r_k$,
\[
\norm{\theta_t-\bar\theta_t}
\le\opnorm{L_t}\TRC_{\Fcal,B}.
\]
Thus a response-space residual transfers to trajectory shadowing.  It does not identify which implementation module caused the residual.

\subsection{Relation to the companion geometry papers}

P4 studies when a visible SPD action has a unique minimal full completion and supplies exact AIRM quotient geometry \citep{li2026informationinducedgeometry}.  P1 studies smooth hidden-variable reduction and metric evolution \citep{li2026optimizationgeometrodynamics}.  P2 restricts geometry families and path budgets \citep{li2026restrictedcomplexity}.  Here their outputs become response classes and intervention coordinates.  The causal and statistical theorems remain valid for nongeometric sectors as well.

%% file: sections/08_experiments.tex
\section{Factorial Closure and Response-Class Audits}
\label{sec:experiments}

The evidence has two deliberately different layers.  A controlled real-data
experiment closes the complete chain from hidden strong convexity through
Schur curvature, M\"obius effects, experimental design, inference,
falsification, and finite-response tracking.  Neural-network trace audits then
test whether the declared response classes remain informative in nonconvex
training.  Only the first layer is a direct validation of the
reduced-value interaction law.

\subsection{A real-data hidden optimization model}

We use the handwritten digits 3 and 8 from the scikit-learn digits data.  A
stratified split gives 249 training and 108 test examples.  After standardizing
the 64 pixels and adding an intercept, the hidden state is
$w\in\R^{65}$.  The base energy is ridge logistic regression,
\[
E_0(w)=\frac1N\sum_{n=1}^N
\log(1+e^{-y_nx_n^\top w})+\frac{\lambda}{2}\norm w^2,
\qquad \lambda=0.15.
\]
Three continuous interventions $u\in[0,1]^3$ emphasize the digit-3 examples,
the digit-8 examples, and a central-$2\times2$-occlusion robustness loss.  If
$\Delta_i$ is the corresponding mean logistic loss, the hidden energy and
declared response are
\begin{equation}
E(w,u)=E_0(w)+0.75\sum_{i=1}^3u_i\Delta_i(w),
\qquad
F(u)=\min_w E(w,u).
\label{eq:digits-hidden-energy}
\end{equation}
Every added loss is convex, so $E_{ww}\succeq0.15I$ on the whole intervention
cube and every configuration has a unique global minimizer.  The largest
terminal gradient infinity norm across the eight Boolean solves is
$3.09\times10^{-13}$.

\subsection{Curvature, Boolean effects, and continuous prediction}

Boolean effects are computed from eight independently optimized corner
responses.  Separately, order-24 Gauss--Legendre quadrature evaluates
$-g_i^\top H^{-1}g_j$ after resolving the hidden optimum at every quadrature
node.  Neither path uses the output of the other.

\begin{table}[t]
\centering
\caption{Independent Boolean and curvature-integral evaluations for the three
mechanism pairs in \cref{eq:digits-hidden-energy}.  Pair 1--2 emphasizes the
two classes; pairs 1--3 and 2--3 combine class information with occlusion
robustness.}
\label{tab:interaction-curvature-closure}
\small
\input{figures/interaction_curvature_table.tex}
\end{table}

\begin{figure}[t]
\centering
\includegraphics[width=0.72\linewidth]{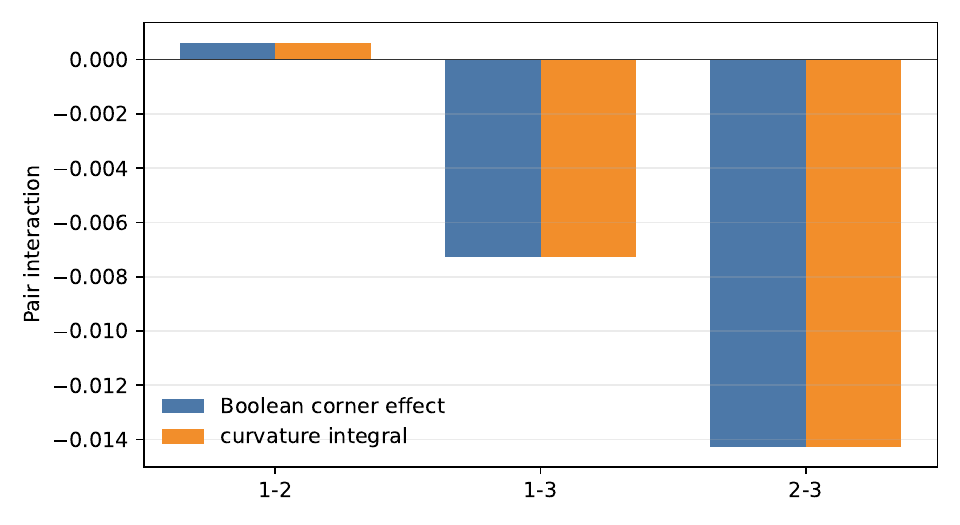}
\caption{Boolean pair effects and independently integrated Schur curvature.
The largest absolute discrepancy is $4.21\times10^{-11}$.}
\label{fig:interaction-curvature-validation}
\end{figure}

Across the three pairs, the maximum Boolean identity error is
$4.21\times10^{-11}$.  We also reserve three non-Boolean intensity points per
pair, $(0.2,0.7)$, $(0.45,0.35)$, and $(0.8,0.55)$.  Direct joint optimization
and edge-response-plus-curvature reconstruction agree at all nine points with
maximum error $8.88\times10^{-13}$.  Repeating all Boolean solves and three
curvature integrals on eight independent stratified data splits gives 24
additional identities with maximum error $2.81\times10^{-8}$.  The small
class--class effect changes sign across splits, whereas both
class--occlusion effects remain negative.

\subsection{Information design and Gaussian confirmation}

We declare the order-two support
\[
\Kcal_2=\{\varnothing,1,2,3,12,13,23\}
\]
and query its seven masks.  Their incidence matrix is invertible.  Mask $123$
is held out to test the omitted third-order effect, whose exact value is
\[
\zeta_{123}=0.002366036.
\]
By \cref{cor:target-optimal-replication}, the target risk for the three pair
effects has allocation weights $\sigma_a\norm{LM_{:,a}}$.  For the declared independent Gaussian channel
$Y_{a,r}=F(a)+\varepsilon_{a,r}$ with
$\sigma_a=0.004(1+0.2|a|)$, the pilot budget $(420,80)$ for main and held-out
replications has only $0.240$ exact power.  Scaling both budgets by the first
integer that reaches the predeclared $0.95$ target gives 3780 main and 720
held-out replications.  Pair-A-optimal integer allocation is
\[
(594,582,582,582,480,480,480).
\]

In 100,000 independent confirmatory campaigns, exact theoretical power is
$0.96309$ and empirical rejection is $0.96260$.  Held-out null rejection is
$0.04958$, and the joint 95\% pair ellipsoid covers with probability
$0.94973$.  Pair-A-optimal theoretical and empirical risks are respectively
$5.1433\times10^{-7}$ and $5.1297\times10^{-7}$, both below the uniform-design
values $5.1911\times10^{-7}$ and $5.2050\times10^{-7}$.  The improvement is
only $0.92\%$ under this mild heteroscedasticity; the point is agreement of
the information calculation and the Monte Carlo experiment, not a large
practical gain.

\subsection{A data-driven minibatch channel}

The Gaussian channel tests the exact finite-sample theorem under its stated
model.  A second channel uses only real minibatch losses.  At each optimized
configuration, independently sampled minibatches provide an unbiased estimate
of every active loss component in \cref{eq:digits-hidden-energy}.  A pilot of
200 observations per mask estimates heteroscedastic standard deviations and
sets, before confirmation, the allocation
\[
(624,537,614,625,386,458,536),
\]
again with 3780 main and 720 held-out observations.  All 4500 confirmatory raw
responses are retained.  Stratified nonparametric bootstrap with 20,000 draws
gives
\[
\begin{array}{ccl}
\zeta_{12}:&-0.000251,&[-0.001094,\ 0.000601],\\
\zeta_{13}:&-0.008050,&[-0.008900,-0.007226],\\
\zeta_{23}:&-0.014293,&[-0.015164,-0.013418].
\end{array}
\]
The two stable class--occlusion intervals contain their fixed-split truths and
exclude zero.  The small class--class interval includes zero and misses its
fixed-split truth $0.000625$ by $2.35\times10^{-5}$; we retain this miss rather
than claiming simultaneous coverage from three marginal 95\% intervals.  The
independent held-out triple estimate is $0.003114$ with interval
$[0.001839,0.004399]$, rejecting the order-two support under real,
heteroscedastic, non-Gaussian sampling noise.  Bootstrap pair risk
$5.7044\times10^{-7}$ agrees with the information estimate
$5.7297\times10^{-7}$ and is below the uniform estimate
$5.8620\times10^{-7}$.

\subsection{Finite-response tracking}

The Schur law is quasistatic, whereas an optimizer reaches $w_\star(u)$ in
finite time.  We run fixed-step gradient descent at the analytic smoothness
bound.  Strong convexity supplies a geometric objective-gap bound, and the sum
of four corner gaps bounds every pair-effect error.  At every audited horizon
the actual error is below the exact gap bound, which is in turn below the
strong-convexity bound.

\begin{figure}[t]
\centering
\includegraphics[width=0.72\linewidth]{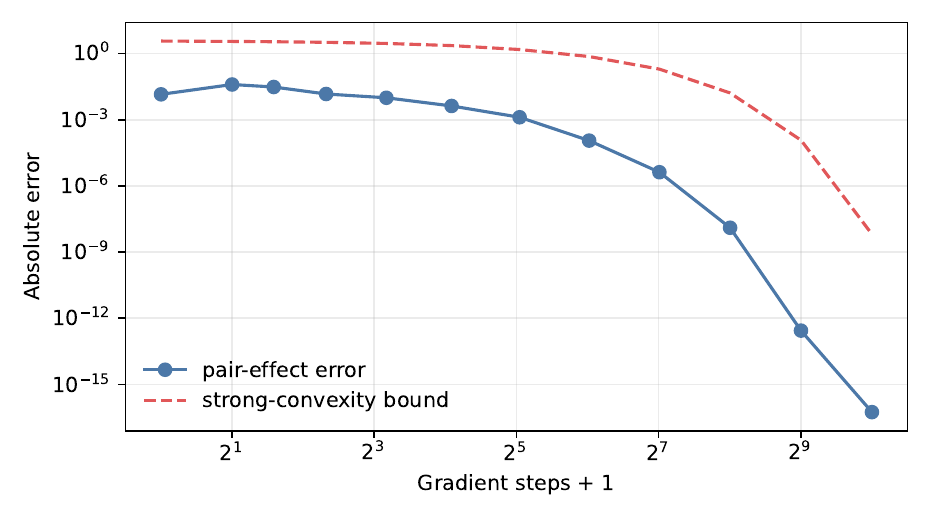}
\caption{Convergence of finite-horizon pair effects to the quasistatic effects.
The maximum error decreases from $1.43\times10^{-2}$ at initialization to
$5.55\times10^{-17}$ after 1024 steps and stays below the theoretical bound.}
\label{fig:interaction-finite-response}
\end{figure}

\subsection{Neural response-class audits}

The controlled experiment validates the theorem under global strong
convexity.  We retain the earlier neural audits for a different purpose:
testing whether declared geometric response classes distinguish optimizer
traces in nonconvex training.  A synthetic sanity check first separates fixed
diagonal, time-varying diagonal, momentum, and operator-mixing mechanisms.

\begin{table}[t]
\centering
\caption{Synthetic response-class sanity check.  Raw TGER permits a
step-specific diagonal map; coherent TGER fits one map over the trace.}
\label{tab:followup-synthetic-sanity}
\small
\input{figures/paper03_followup_synthetic_sanity_table.tex}
\end{table}

The long audits record 4096 pre-step gradients and updates for three seeds on
MNIST/Fashion-MNIST and four on CIFAR-10.  Bounded diagonal entries lie in
$[10^{-6},1]$; layer scalar uses one positive scalar per tensor.

\begin{table}[t]
\centering
\caption{Trace-audit seed means.  BDiag uses bounded-diagonal raw TGER under
the raw-gradient convention; +WD changes the visible covector; Layer uses the
layer-scalar family.  Sample standard deviations are retained in the supplied
CSV files.}
\label{tab:rich-convention-summary}
\small
\input{figures/paper03_rich_convention_table.tex}
\end{table}

\begin{figure}[t]
\centering
\includegraphics[width=0.72\linewidth]{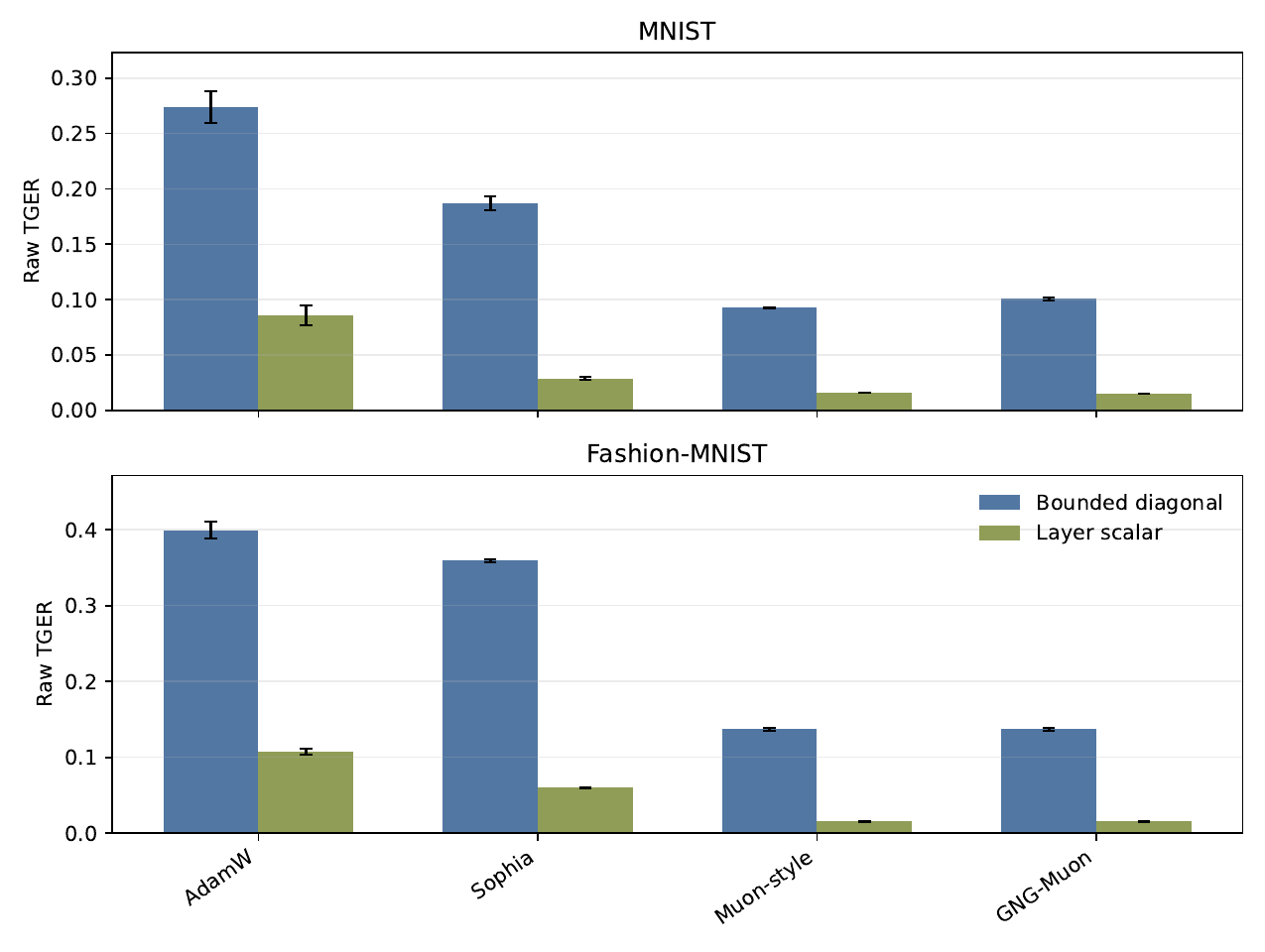}
\caption{Response-class sensitivity.  Layer scalar is nested inside bounded
diagonal geometry, and the explanation-rate drop varies by optimizer.}
\label{fig:family-sensitivity}
\end{figure}

A follow-up campaign adds channel scalar, an Adafactor-style row, paired
protocol changes, and a tiny-ResNet.  The intervention table reports
ablated-minus-base paired means and sample standard deviations across three
seeds.

\begin{table}[t]
\centering
\caption{Paired protocol interventions.  The SGD contrast jointly removes
momentum and Nesterov structure; the Lion contrast removes the sign operator
and changes update scale at fixed learning rate.}
\label{tab:followup-interventions}
\small
\input{figures/paper03_followup_intervention_table.tex}
\end{table}

\begin{table}[t]
\centering
\caption{CIFAR-10 small-ConvNet geometry ladder.  The three columns are nested
response families, not optimizer rankings.}
\label{tab:followup-sector-geometry}
\small
\input{figures/paper03_followup_sector_geometry_table.tex}
\end{table}

\begin{table}[t]
\centering
\caption{CIFAR-10 tiny-ResNet geometry ladder under the same short-horizon
response audit.}
\label{tab:followup-tinyresnet-geometry}
\small
\input{figures/paper03_followup_tinyresnet_geometry_table.tex}
\end{table}

The bundled SGD/Nesterov and Lion sign/scale interventions change both
accuracy and geometric residuals; AdamW weight-decay removal has little effect
over the short trace.  These pairs estimate bundled contrasts and do not
isolate momentum from Nesterov lookahead or sign from update scale.  They also
lack factorial masks and hidden-state measurements.  Accordingly, they support
response-class relevance in nonconvex training, while the digits experiment
provides the direct causal-geometric closure.  The current experiments validate
the P1 reduced-value law and the P3 identification/inference layer; the more
general observable-readout law in \cref{thm:readout-interaction-transfer}
remains an exact analytical prediction for a future nonconvex factorial
campaign.

%% file: figures/interaction_curvature_table.tex
\begin{tabular}{lrrr}
\toprule
Pair & Boolean effect & Curvature integral & Abs. error \\
\midrule
1-2 & +0.000625 & +0.000625 & 8.36e-15 \\
1-3 & -0.007276 & -0.007276 & 4.21e-11 \\
2-3 & -0.014264 & -0.014264 & 3.16e-11 \\
\bottomrule
\end{tabular}

%% file: figures/paper03_followup_synthetic_sanity_table.tex
\begin{tabular}{lrrrr}
\toprule
Mechanism & Mean GER & Raw TGER & Coh. TGER & Gap \\
\midrule
constant diagonal & 1.000 & 1.000 & 1.000 & 0.000 \\
varying diagonal & 1.000 & 1.000 & 0.520 & 0.480 \\
momentum memory & 0.484 & 0.455 & 0.227 & 0.227 \\
decoupled decay & 0.925 & 0.924 & 0.761 & 0.163 \\
operator mixing & 0.785 & 0.785 & 0.494 & 0.291 \\
\bottomrule
\end{tabular}

%% file: figures/paper03_rich_convention_table.tex
\begin{tabular}{llrrrr}
\toprule
Dataset & Opt. & Acc. (\%) & BDiag & +WD covec. & Layer \\
\midrule
MNIST & AdamW & 97.6 & 0.274 & 0.256 & 0.086 \\
 & Sophia & 97.5 & 0.187 & 0.278 & 0.029 \\
 & Muon-style & 97.6 & 0.093 & 0.228 & 0.016 \\
 & GNG-Muon & 97.9 & 0.101 & 0.232 & 0.015 \\
\midrule
Fashion-MNIST & AdamW & 88.6 & 0.400 & 0.364 & 0.108 \\
 & Sophia & 88.1 & 0.359 & 0.350 & 0.060 \\
 & Muon-style & 86.9 & 0.137 & 0.241 & 0.016 \\
 & GNG-Muon & 87.3 & 0.137 & 0.241 & 0.015 \\
\midrule
CIFAR-10 & AdamW & 74.6 & 0.367 & 0.338 & 0.088 \\
 & SGD/Nest. & 71.5 & 0.896 & 0.900 & 0.679 \\
 & Lion & 74.2 & 0.400 & 0.336 & 0.188 \\
 & Sophia & 74.7 & 0.345 & 0.335 & 0.035 \\
 & Muon-style & 74.8 & 0.228 & 0.263 & 0.018 \\
 & GNG-Muon & 73.2 & 0.199 & 0.252 & 0.020 \\
\bottomrule
\end{tabular}

%% file: figures/paper03_followup_intervention_table.tex
\begin{tabular}{llrrrr}
\toprule
Intervention & Pair & $\Delta$Acc. & $\Delta$BDiag & $\Delta$Channel & $\Delta$Layer \\
\midrule
weight decay & AdamW $\to$ AdamW/no WD & -0.2$\pm$0.1 & +0.000$\pm$0.001 & -0.000$\pm$0.001 & +0.000$\pm$0.000 \\
momentum/Nesterov & SGD/Nest. $\to$ SGD/no mom. & -18.9$\pm$0.1 & +0.131$\pm$0.009 & +0.218$\pm$0.004 & +0.255$\pm$0.005 \\
sign/scale & Lion $\to$ Lion/no sign & -39.5$\pm$1.9 & +0.191$\pm$0.049 & +0.052$\pm$0.036 & +0.071$\pm$0.036 \\
\bottomrule
\end{tabular}

%% file: figures/paper03_followup_sector_geometry_table.tex
\begin{tabular}{lrrrr}
\toprule
Opt. & Acc. (\%) & BDiag & Channel & Layer \\
\midrule
AdamW & 61.6 & 0.197 & 0.109 & 0.072 \\
AdamW/no WD & 61.4 & 0.198 & 0.109 & 0.072 \\
SGD/Nest. & 48.7 & 0.750 & 0.510 & 0.471 \\
SGD/no mom. & 29.8 & 0.881 & 0.728 & 0.727 \\
Lion & 50.3 & 0.412 & 0.241 & 0.197 \\
Lion/no sign & 10.8 & 0.603 & 0.293 & 0.268 \\
Sophia & 45.8 & 0.284 & 0.072 & 0.041 \\
Adafactor-style & 46.2 & 0.179 & 0.173 & 0.162 \\
Muon-style & 72.8 & 0.255 & 0.028 & 0.015 \\
GNG-Muon & 72.3 & 0.229 & 0.025 & 0.013 \\
\bottomrule
\end{tabular}

%% file: figures/paper03_followup_tinyresnet_geometry_table.tex
\begin{tabular}{lrrrr}
\toprule
Opt. & Acc. (\%) & BDiag & Channel & Layer \\
\midrule
AdamW & 58.6 & 0.300 & 0.159 & 0.124 \\
SGD/Nest. & 46.7 & 0.515 & 0.342 & 0.273 \\
Lion & 47.9 & 0.405 & 0.173 & 0.145 \\
Sophia & 39.5 & 0.392 & 0.099 & 0.071 \\
Adafactor-style & 40.7 & 0.217 & 0.216 & 0.210 \\
Muon-style & 61.0 & 0.173 & 0.043 & 0.027 \\
\bottomrule
\end{tabular}

%% file: sections/09_discussion.tex
\section{Discussion, Boundaries, and Conclusion}
\label{sec:discussion}

\paragraph{What is complete.}
Pathwise history realization is complete for finite-horizon innovation-driven laws under the fixed coupling, M\"obius inversion is complete for lower-finite intervention responses, and the observation kernel is the complete indistinguishability gauge for a declared finite support.  Completeness concerns representation and identification under a fixed protocol; it does not provide a distribution-only minimal stochastic state or a universal predictive transition map across protocols.

\paragraph{What the interaction layer adds.}
P1 explains reduced-value interaction through inverse vertical stiffness, and P2 gives the path-space version.  The new readout theorem shows how that hidden response reaches an experimentally selected update, trace, or task observable.  Its extra readout and second-response terms explain why an actual optimizer response need not inherit the P1 negative-semidefinite sign.  The same observed pair effect can admit multiple hidden models, so the calculus does not identify geometry from responses alone.  Nonunique, nonsmooth, or hysteretic hidden responses require generalized derivatives or set-valued effects.

\paragraph{Projection and causality.}
A small residual proves proximity to a response class.  It does not allocate causal credit.  A passive top configuration leaves $d(|\Kcal|-1)$ supported coordinates ambiguous.  Controlled configurations, additional instrumentation, or structural restrictions are required for attribution.

\paragraph{The price of exact structure.}
Unrestricted Boolean attribution requires $2^m$ configurations, and its saturated Gaussian risk grows as $(3+2\sqrt2)^m/N$.  Scientific tractability comes from falsifiable structure: low interaction order, sparse hierarchy, graded interventions, or a target decision subspace.  Adaptive support selection requires selective-inference or multiplicity control beyond the fixed-support theorem.

\paragraph{Statistical boundary.}
Exact minimax, confidence, and chi-square conclusions assume a finite-dimensional linear experiment with known positive-definite covariance.  Estimated covariance, dependent traces, and adaptive designs need additional analysis.  Randomization calibration is exact only under a defensible invariant group action.

\paragraph{Empirical boundary.}
The digits experiment directly validates the P1 reduced-value curvature law and the P3 incidence, design, inference, and falsification layers under global strong convexity and additive interventions.  It does not validate the general readout-transfer law for an actual nonconvex optimizer trace.  The neural experiments establish response-class sensitivity under fixed protocols, but they lack factorial masks and hidden-response measurements.  Broad optimizer superiority would additionally require matched hyperparameter search, compute, throughput, memory, and modern workloads.

\paragraph{Research direction.}
The immediate theoretical target is to specialize the readout-transfer law to the full SPD and path-space realizations supplied by the companion papers.  The immediate empirical target is a nonconvex geometry--memory--operator campaign that measures update readouts, complete factorial masks, and the hidden first and second responses required by \cref{eq:readout-interaction-transfer}.

\paragraph{Conclusion.}
Causal optimizer interaction calculus separates what an optimizer can generate, what an intervention changes, and what an experiment identifies.  The universal layer covers innovation-driven finite-horizon rules under a fixed coupling.  P1 and P2 supply the hidden Schur structure; the new readout layer transports it to actual optimizer observables, and the incidence layer states exactly which resulting effects an experiment can recover.  This organization turns the four-paper geometry program into a concrete experimental theory without duplicating the companion structural theorems or forcing general optimizers into a gradient-flow model.

%% file: sections/10_statements.tex
\section{Reproducibility Statement}
\label{sec:reproducibility}

All deterministic claims are proved in the appendices.  The universal causal and incidence theorem is proved in \cref{app:theory-strengthening}; Appendix~\ref{app:interaction-proof} reproduces the attributed P1 reduction proof for self-containment and proves the new observable-readout transfer; the observation/inference theorem is proved in \cref{app:deep-theory}; and the geometric expressivity results are proved in \cref{app:proofs}.

The deterministic verification companion is \path{experiments/optimizer_calculus/verify_calibrated_calculus.py}.  It checks incidence inversion, design rank, quotient algebra, Gaussian risk and coverage, sector tests, misspecification, randomization calibration, and action certificates.  The complete factorial closure is implemented by \path{experiments/optimizer_calculus/interaction_curvature_experiment.py} with \path{experiments/optimizer_calculus/interaction_curvature_config.json}; six derivative, curvature, design, inference, and finite-response tests are in \path{experiments/optimizer_calculus/test_interaction_curvature_experiment.py}, and \path{experiments/optimizer_calculus/verify_interaction_curvature_results.py} independently audits the saved CSV/JSON artifacts.  The submission includes the configuration snapshot, all summary tables, 4500 confirmatory minibatch observations, and result JSON under \path{newpapers/03-geometric-nongeometric-optimizer-calculus/artifacts/interaction_curvature}.  Quadratic diagnostics use \path{experiments/toy/gng_optimizer_benchmark.py}.  Trace audits use the scripts under \path{experiments/lane_audit}.

All random sources, quadrature orders, budgets, bootstrap draws, and power targets for the factorial closure are fixed in the configuration.  The production outputs pass the independent artifact audit; the six new tests and the original calibrated-calculus verification pass.  Exact software-package versions are not locked, so bitwise reproduction across numerical-library versions is not claimed.

The long runs are \path{paper03_rich_family_b1024_l28_20260710_2223} and \path{paper03_rich_cifar10_b1024_l28_20260710_2223}.  They use batch size $1024$, lane count $28$, and $4096$ recorded steps per job.  MNIST/Fashion-MNIST uses seeds $7,8,9$; CIFAR-10 uses $7,8,9,11$.  Follow-up runs are \path{paper03_followup_sector_cifar_b1024_l28_20260711_0242} and \path{paper03_followup_tinyresnet_cifar_b1024_l28_20260711_0242}, using seeds $7,8,9$ and $128$ or $96$ recorded steps.  Analysis commands, recorded parameters, job counts, model widths, splits, and monitor statistics are in \cref{app:diagnostic-details}.  The surviving run artifacts do not bind exact Python, NumPy, PyTorch, and CUDA versions to these Paper03 jobs.

The neural scripts fix Python, NumPy, PyTorch, and data-loader seeds, but their archived configuration does not assert deterministic CUDA algorithms across hardware.  The follow-up raw run directories and per-seed summaries are present in the research workspace; the long-run raw trace archives are absent from the submission, which retains their aggregate CSV/JSON summaries.  The run manifest records the launch argument vector but contains \texttt{git\_error} in place of a commit hash.  There is no environment lockfile or complete hyperparameter-search log.  Consequently the controlled factorial closure is independently auditable from supplied artifacts, while exact bitwise regeneration of every legacy neural trace is not guaranteed.

\section{Ethics Statement}
\label{sec:ethics}

This work studies optimizer theory and diagnostic experiments and uses no human-subject data.  The main risk is overstating causal attribution or optimizer performance from passive or incomplete interventions.  We separate controlled factorial effects and support tests from passive response-class exclusion and from still-unmeasured joint effects in the neural audits.

%% file: appendix/a_universal_proof.tex
\section{Proof of the Universal Representation Theorem}
\label{app:theory-strengthening}

We prove \cref{thm:universal-interventional-calculus} in full.

\subsection{Canonical causal realization}

Fix an intervention configuration $a$.  At time $k$, let $h_k$ denote the
complete extended history of oracle observations, exogenous inputs, realized
innovations, and previous outputs.  All component spaces are standard Borel.
Causality means that, after exposing the protocol's innovation, the current
response is a measurable function of $h_k$, the current declared input, and
that innovation, without dependence on future inputs or innovations.  Taking
$x_k^a=h_k$ and letting $\Phi_k^a$ append the next declared input,
observation, innovation, and output gives a pathwise unifilar history-state
realization of \cref{eq:causal-response}.  Thus existence requires no
finite-memory assumption.

For extended histories at the same time, define $h\sim h'$ when every common
future sequence of declared inputs and innovation values produces identical
future response sequences.  This is an equivalence relation.  Let $[h]$ be
the equivalence class.  If $h\sim h'$, append the same next input and
innovation value.  Every later input--innovation sequence is then a common
continuation of the original histories, so the two extended histories remain
equivalent.  The transition $[h]\mapsto[h^+]$ is therefore well-defined, and
the response map is constant on equivalence classes.  The classes form a
sufficient pathwise predictive-state realization in the category defined in
\cref{sec:language}.

Now let $s(h)$ be the state reached by any other admissible sufficient
pathwise/unifilar realization.  If $s(h)=s(h')$, its response map and
unifilar updates produce the same future response sequence under every common
input--innovation continuation, so $h\sim h'$.  Hence
\[
\pi:s(h)\longmapsto[h]
\]
is a well-defined surjection from its reachable states onto the predictive
quotient.  A realization with two reachable states mapped to the same class
is not behaviorally minimal.  Every behaviorally minimal reachable
realization therefore makes $\pi$ bijective.  Its transition and response
maps are conjugate to the quotient maps, proving uniqueness up to state
relabeling.  This is a set-level behavioral quotient relative to the fixed
innovation representation.  A standard-Borel structure on the quotient
requires the induced equivalence relation to be smooth; the theorem does not
assume that additional descriptive-set-theoretic property.

\subsection{Incidence-algebra completeness}

Let $\mu_{\mathcal P}$ be the M\"obius function of the lower-finite poset.
Every sum below is finite.  Define
\[
\zeta_t=\sum_{s\preceq t}\mu_{\mathcal P}(s,t)F(s).
\]
For fixed $a\in\mathcal P$, exchange the finite sums:
\begin{align*}
\sum_{t\preceq a}\zeta_t
&=\sum_{t\preceq a}\sum_{s\preceq t}
\mu_{\mathcal P}(s,t)F(s)\\
&=\sum_{s\preceq a}F(s)
\sum_{t:s\preceq t\preceq a}\mu_{\mathcal P}(s,t).
\end{align*}
By the defining inverse identity for the M\"obius function, the inner sum is
one when $s=a$ and zero otherwise.  The result is $F(a)$.  Applying the same
M\"obius transform to any proposed reconstruction proves uniqueness.

For the Boolean lattice, the interval $[B,T]$ is a Boolean lattice of rank
$|T|-|B|$, so $\mu(B,T)=(-1)^{|T|-|B|}$ and the formula reduces to mixed
finite differences.  For a product of chains, the product M\"obius function
gives the corresponding graded mixed differences.

\subsection{The complete experimental gauge}

Assume $\mathbb H\cong\mathbb R^d$ and stack the supported effect vectors.
For queried configurations $\mathcal D=(a_1,\ldots,a_n)$,
\begin{equation}
y_{\mathcal D}
=(X_{\mathcal D,\mathcal K}\otimes I_d)\zeta,
\qquad
X_{\mathcal D,\mathcal K}(i,t)=\mathbf1\{t\preceq a_i\}.
\label{eq:appendix-mask-linear-system}
\end{equation}
Two collections produce the same observations exactly when their difference
lies in the kernel.  Since
\[
\rank(X_{\mathcal D,\mathcal K}\otimes I_d)
=d\rank X_{\mathcal D,\mathcal K},
\]
rank--nullity gives compatible-fiber dimension
$d(|\mathcal K|-r_{\mathcal D})$ and proves the identification criterion.

If a finite poset has a greatest element $\top$, then $t\preceq\top$ for every
$t\in\mathcal K$.  The single passive row is all ones, has rank one, and
leaves ambiguity dimension $d(|\mathcal K|-1)$.

\subsection{Sharp quotient stability}

Put $A=X\otimes I_d$ and first suppose $X$ has positive rank.  Let
$\widehat\zeta=A^\dagger y$.  Since $A^\dagger A$ projects orthogonally onto
$\ker A^\perp$,
\[
\widehat\zeta-\zeta_{\rm can}
=A^\dagger e
=A^\dagger\operatorname{Proj}_{\operatorname{range}A}e.
\]
The positive singular values of $X\otimes I_d$ are those of $X$, each
repeated $d$ times.  Therefore
\[
\norm{\widehat\zeta-\zeta_{\rm can}}
\le
\frac{\norm{\operatorname{Proj}_{\operatorname{range}A}e}}
{\sigma_{\min}^+(X)}.
\]
Equality holds when the projected error is a left singular vector associated
with $\sigma_{\min}^+(X)$.
If $X$ has rank zero, $\ker A$ is the entire coordinate space and its
canonical identifiable quotient is $\{0\}$, so there is no positive singular
direction to estimate.

\subsection{Exact fixed and adaptive intervention complexity}

An $n\times|\mathcal K|$ matrix cannot have full column rank for
$n<|\mathcal K|$, so every fixed exact design needs at least
$|\mathcal K|$ queries.  To attain the bound, query the configurations in
$\mathcal K$.  Choose a linear extension of the partial order and use it for
both rows and columns.  In the square matrix
\[
X_{a,t}=\mathbf1\{t\preceq a\},
\qquad a,t\in\mathcal K,
\]
a nonzero entry can occur only when the column precedes the row, and every
diagonal entry is one.  The matrix is lower triangular with unit diagonal and
is invertible.

For the deterministic adaptive lower bound, run a strategy with fewer than
$|\mathcal K|$ queries against the zero effect law.  Let $\mathcal D_0$ be its
realized configurations.  Choose nonzero
$c\in\ker X_{\mathcal D_0,\mathcal K}$ and nonzero $v\in\Hspace$, and set
$h_t=c_tv$.  The zero law and the law $\zeta'=h$
return identical responses at the first query.  If their transcripts agree
through one query, determinism selects the same next configuration, where the
responses again agree by the kernel construction.  Induction gives identical
complete transcripts under distinct laws.

Now consider a randomized strategy.  Although the ambient lower-finite poset
need not be finite, a query produces one of at most $2^{|\mathcal K|}$
distinct incidence row patterns on the finite support $\mathcal K$.  Fewer
than $|\mathcal K|$ queries therefore produce one of finitely many row-pattern
sequences.  Under the zero law, at least one such sequence $R_0$ occurs with
positive probability.  Its stacked matrix has fewer than $|\mathcal K|$
rows, so choose nonzero $c$ in its kernel, fix nonzero $v\in\Hspace$, and set
$h_t=c_tv$.  On the event producing $R_0$, the zero law and $h$ return
identical responses at every queried pattern.  With
the same internal random choices, induction gives the same subsequent
configurations and hence the same full transcript.  The estimator returns
the same value for two distinct laws on an event of positive probability, so
it cannot be correct with probability one for both.  Randomization cannot
beat the lower bound.

\subsection{Prediction and falsification}

When $X_{\mathcal D,\mathcal K}$ has full column rank, the fitting responses
determine a unique $\zeta$.  The structural model forces
\[
F_{\mathcal K}(b)
=\sum_{\substack{t\in\mathcal K\\t\preceq b}}\zeta_t
\qquad\forall b\in\mathcal P.
\]
A held-out response differing from this value cannot belong to the declared
support class.  This proves falsifiability and completes the theorem.

%% file: appendix/b_interaction_proof.tex
\section{Proof of the Reduction-Induced Interaction Theorem}
\label{app:interaction-proof}

For self-containment, we reproduce the P1 proof of
\cref{thm:reduction-induced-interaction} and
\cref{prop:reduction-mobius-noncommutation}, then prove the new observable
readout law in \cref{thm:readout-interaction-transfer}.

\subsection{Continuity and smoothness of the minimizer section}

Fix $z_0=(y_0,u_0)$ and write $\sigma_0=\sigma_\star(z_0)$.  Let $z_n\to z_0$.  Evaluating each fiber objective at $\sigma_0$ shows that $E(z_n,\sigma_\star(z_n))$ is bounded above for large $n$.  Local uniform inf-compactness therefore places the minimizers in one relatively compact set.  Every subsequence has a further convergent subsequence, say $\sigma_\star(z_{n_j})\to\bar\sigma$.

For any fixed $\sigma$, optimality gives
\[
E(z_{n_j},\sigma_\star(z_{n_j}))\le E(z_{n_j},\sigma).
\]
Continuity and passage to the limit yield $E(z_0,\bar\sigma)\le E(z_0,\sigma)$ for all $\sigma$.  Uniqueness at $z_0$ implies $\bar\sigma=\sigma_0$.  Every convergent subsequence has this limit, hence the full minimizer sequence converges.  Thus $\sigma_\star$ is continuous.

The interior first-order condition is
\begin{equation}
E_\sigma(z,\sigma_\star(z))=0.
\label{eq:appendix-stationarity}
\end{equation}
At every minimizer, $D_\sigma E_\sigma=H$ is invertible.  The implicit-function theorem gives a local $C^r$ critical section through each $(z,\sigma_\star(z))$.  Continuity of the global minimizer section makes it coincide locally with that critical section.  These local representations agree by uniqueness, so $\sigma_\star$ is globally $C^r$ on the parameter domain.

\subsection{Hidden response and effective Hessian}

Differentiate \cref{eq:appendix-stationarity} with respect to $z=(y,u)$:
\[
E_{\sigma z}+H D_z\sigma_\star=0.
\]
Since $H$ is invertible,
\[
D_z\sigma_\star=-H^{-1}E_{\sigma z},
\]
which proves \cref{eq:hidden-response-law}.

For the reduced value, the chain rule and stationarity give the envelope identity
\[
D_z\bar E
=E_z+E_\sigma D_z\sigma_\star
=E_z.
\]
Differentiating again,
\[
D^2_{zz}\bar E
=E_{zz}+E_{z\sigma}D_z\sigma_\star
=E_{zz}-E_{z\sigma}H^{-1}E_{\sigma z}.
\]
This is \cref{eq:effective-schur-hessian}.  At a critical hidden point the displayed quadratic form is the Schur complement of the vertical Hessian, hence is invariant under smooth changes of hidden coordinates.

\subsection{Affine interventions and interaction curvature}

Under \cref{eq:affine-interventions}, $E_{uu}=0$ and $E_{\sigma u}=G$.  The $u$ block of \cref{eq:effective-schur-hessian} is therefore
\[
D^2_{uu}\bar E=-G^*H^{-1}G.
\]
For every $a\in\R^m$,
\[
a^\top D^2_{uu}\bar E\,a
=-\ip{Ga}{H^{-1}Ga}\le0,
\]
because $H^{-1}$ is positive definite.  This proves \cref{eq:interaction-curvature}.  Componentwise,
\begin{equation}
\partial^2_{u_i u_j}\bar E
=-\ip{D_\sigma\Delta_i}{H^{-1}D_\sigma\Delta_j}.
\label{eq:appendix-component-curvature}
\end{equation}

The same concavity conclusion holds without differentiability whenever the infimum is finite: for fixed $(y,\sigma)$, the unreduced energy is affine in $u$, and a pointwise infimum of affine functions is concave.

\subsection{Observable readout interaction}

Fix $y$ and suppress it from the notation.  Write
$v_i=\partial_{u_i}\sigma_\star$.  Differentiating stationarity once gives
\[
Hv_i+E_{\sigma u_i}=0,
\qquad
v_i=-H^{-1}E_{\sigma u_i}.
\]
Differentiate this identity with respect to $u_j$ along the minimizing
section.  The total derivative of $H=E_{\sigma\sigma}$ contributes
$E_{\sigma\sigma\sigma}[v_j,v_i]+E_{\sigma\sigma u_j}v_i$, while the total
derivative of $E_{\sigma u_i}$ contributes
$E_{\sigma\sigma u_i}v_j+E_{\sigma u_i u_j}$.  Therefore
\[
H\partial^2_{u_i u_j}\sigma_\star
+E_{\sigma\sigma\sigma}[v_i,v_j]
+E_{\sigma\sigma u_i}v_j
+E_{\sigma\sigma u_j}v_i
+E_{\sigma u_i u_j}=0.
\]
Invertibility of $H$ proves \cref{eq:second-hidden-response}.

For $F_R(u)=R(u,\sigma_\star(u))$, the first derivative is
\[
\partial_{u_i}F_R=R_{u_i}+R_\sigma v_i.
\]
Differentiating with respect to $u_j$ and applying the product and chain rules
gives
\[
\partial^2_{u_i u_j}F_R
=R_{u_i u_j}+R_{u_i\sigma}v_j+R_{\sigma u_j}v_i
+R_{\sigma\sigma}[v_i,v_j]
+R_\sigma\partial^2_{u_i u_j}\sigma_\star,
\]
which is \cref{eq:readout-interaction-transfer}.  Applying the two coordinate
finite-difference operators and the fundamental theorem of calculus proves
\cref{eq:readout-mobius-integral}.  If $E$ is affine in $u$, direct
differentiation gives $E_{\sigma u_i u_j}=0$ and
$E_{\sigma\sigma u_i}=D^2_{\sigma\sigma}\Delta_i$.  No sign follows for the
readout Hessian because its five terms need not form a negative Gram matrix.

\subsection{M\"obius effects as integrated derivatives}

For a coordinate $i$, let $\mathsf T_i$ replace $u_i=0$ by $u_i=1$.  The fundamental theorem of calculus gives
\[
(\mathsf T_i-I)\bar E(0)
=\int_0^1\partial_{u_i}\bar E(t_ie_i)\,\dd t_i.
\]
The coordinate difference operators commute.  Repeating the identity for every $i\in T$ gives
\[
\prod_{i\in T}(\mathsf T_i-I)\bar E(0)
=\int_{[0,1]^T}
\partial_T^{|T|}\bar E\left(\sum_{i\in T}t_ie_i\right)
\prod_{i\in T}\dd t_i.
\]
Expanding the product on the left yields
\[
\sum_{B\subseteq T}(-1)^{|T|-|B|}\bar E(\mathbf1_B)=\zeta_T,
\]
proving \cref{eq:mobius-derivative-integral}.  For $T=\{i,j\}$, substitution of \cref{eq:appendix-component-curvature} proves \cref{eq:mobius-schur-bridge}.

A zero integral does not force the integrand to vanish because mixed curvature may change sign.  Pointwise vanishing is equivalent to inverse-Hessian orthogonality of $D_\sigma\Delta_i$ and $D_\sigma\Delta_j$.

\subsection{Noncommutation and analytical examples}

For \cref{prop:reduction-mobius-noncommutation}, let
\[
E_0(\sigma)=\sigma^2,
\qquad
E_1(\sigma)=(\sigma-1)^2.
\]
Both reduced values are zero, so their reduced first effect is zero.  The pointwise first difference is $E_1-E_0=1-2\sigma$, whose infimum is $-\infty$.  Thus no general exchange law exists.

If instead $E_A(\sigma)=E_0(\sigma)+\sum_{i\in A}c_i$ with fiber constants $c_i$, then the constants leave the minimizer unchanged and move outside the infimum.  The reduced response is additive, with singleton effects $c_i$ and all higher effects zero.

Finally, for
\[
E(\sigma,u)=\frac h2\sigma^2+u_1a\sigma+u_2b\sigma,
\]
stationarity yields $\sigma_\star=-(au_1+bu_2)/h$.  Substitution gives
\[
\bar E(u)=-\frac{(au_1+bu_2)^2}{2h}.
\]
The Boolean pair contrast is
\[
\bar E(1,1)-\bar E(1,0)-\bar E(0,1)+\bar E(0,0)
=-\frac{ab}{h},
\]
matching \cref{eq:mobius-schur-bridge}.

For the coupled log-preconditioner in
\cref{eq:coupled-log-preconditioner}, $h>|\rho|$ makes
\[
H=\begin{pmatrix}h&\rho\\ \rho&h\end{pmatrix}
\quad\text{positive definite},\qquad
H^{-1}=\frac1{h^2-\rho^2}
\begin{pmatrix}h&-\rho\\-\rho&h\end{pmatrix}.
\]
Stationarity gives
\[
H\sigma-b(g)+u=0,
\qquad
\sigma_\star(g,u)=H^{-1}(b(g)-u).
\]
Completing the square yields the reduced scalar calibration response
\[
\bar E(g,u)
=-\frac12(b(g)-u)^\top H^{-1}(b(g)-u).
\]
Because $D_\sigma\Delta_1=e_1$, $D_\sigma\Delta_2=e_2$, and
$G=I_2$, its mixed derivative is constant:
\[
\partial_{u_1u_2}^2\bar E
=-e_1^\top H^{-1}e_2
=\frac{\rho}{h^2-\rho^2}.
\]
The integral over the Boolean intervention square has the same value, proving
\cref{eq:coupled-log-preconditioner-pair}.  The selected update
$q(\sigma_\star,g)=-\diag(e^{-\sigma_{\star,1}/2},
e^{-\sigma_{\star,2}/2})g$ is a positive diagonal preconditioned gradient
step for every intervention amplitude.

Let $D=h^2-\rho^2$ and $\sigma_\star(0)=H^{-1}b(g)$.  The first hidden
coordinate is
\[
\sigma_{\star,1}(u)
=\sigma_{\star,1}(0)-\frac hD u_1+\frac\rho D u_2.
\]
Substitution into $q_1=-g_1e^{-\sigma_{\star,1}/2}$ yields the exponential
readout displayed in the main text.  For a function
$q_1(0)e^{a u_1+c u_2}$, the Boolean pair difference is exactly
$q_1(0)(e^a-1)(e^c-1)$.  Taking $a=h/(2D)$ and
$c=-\rho/(2D)$ proves \cref{eq:coupled-update-readout-pair} and completes the
optimizer realization.

%% file: appendix/c_observation_proof.tex
\section{Proof of the Projection--Information--Decision Theorem}
\label{app:deep-theory}

\subsection{Projection, duality, stability, and nesting}

Because $\mathcal M$ is nonempty, closed, and convex in finite dimension, the
strictly convex coercive function
$m\mapsto\frac12\norm{y-m}_{V^{-1}}^2$ has a unique minimizer
$\widehat m$.  Its first-order variational inequality is
\begin{equation}
(y-\widehat m)^\top V^{-1}(m-\widehat m)\le0
\qquad\forall m\in\mathcal M.
\label{eq:projection-variational-inequality}
\end{equation}

For arbitrary $q$ and $m$, weighted Young's inequality gives
\[
q^\top(y-m)-\frac12q^\top Vq
\le\frac12\norm{y-m}_{V^{-1}}^2.
\]
Taking the infimum over $m\in\mathcal M$ yields
\[
q^\top y-\sigma_{\mathcal M}(q)-\frac12q^\top Vq
\le\frac12\delta_{\mathcal M}(y)^2.
\]
Set $q^\star=V^{-1}(y-\widehat m)$.  By
\cref{eq:projection-variational-inequality},
$(q^\star)^\top m\le(q^\star)^\top\widehat m$ for all $m\in\mathcal M$;
hence
$\sigma_{\mathcal M}(q^\star)=(q^\star)^\top\widehat m$.  Substitution gives
equality and proves \cref{eq:master-projection-dual}.

The distance stability follows from the triangle inequality.  For any
$m\in\mathcal M$,
\[
\delta_{\mathcal M}(y)
\le\norm{y-y'}_{V^{-1}}+\norm{y'-m}_{V^{-1}}.
\]
Taking the infimum over $m$ and exchanging $y,y'$ proves the one-Lipschitz
bound.  If $\mathcal M_1\subseteq\mathcal M_2$, the infimum defining distance
is taken over a larger set for $\mathcal M_2$, proving nesting monotonicity.

\subsection{Observational quotient and exact minimax risk}

In the model $Y=Az+\varepsilon$ with common covariance $V$, two Gaussian laws
are equal exactly when their means agree.  Therefore
\[
P_z=P_{z'}
\Longleftrightarrow A(z-z')=0
\Longleftrightarrow z-z'\in K.
\]
Every coset has one orthogonal representative in $K^\perp$.

For $z\in K^\perp$, $\mathcal I^\dagger\mathcal I z=z$, and
\[
\widehat z-z
=\mathcal I^\dagger A^\ast V^{-1}\varepsilon.
\]
Its covariance is
\begin{align*}
\operatorname{Cov}(\widehat z-z)
&=\mathcal I^\dagger A^\ast V^{-1}VV^{-1}A\mathcal I^\dagger\\
&=\mathcal I^\dagger\mathcal I\mathcal I^\dagger
=\mathcal I^\dagger.
\end{align*}
The risk is therefore $\tr(\mathcal I^\dagger)$ for every canonical $z$.

For the matching lower bound, restrict the experiment to $K^\perp$, where
$\mathcal I$ is positive definite, and use the Gaussian prior
$z\sim N(0,\tau^2I)$.  The posterior covariance is
\[
C_\tau=(\tau^{-2}I+\mathcal I)^{-1}.
\]
The Bayes risk of the posterior mean is $\tr(C_\tau)$ and lower-bounds the
minimax risk.  Letting $\tau\to\infty$ in the eigenvalue formula gives
\[
R_{\rm minimax}\ge\tr(\mathcal I^{-1}|_{K^\perp})
=\tr(\mathcal I^\dagger).
\]
Generalized least squares attains the bound.

Let $\eta=V^{-1/2}\varepsilon\sim N(0,I)$.  The whitened fitted-mean error is
\[
V^{-1/2}A(\widehat z-z)
=\operatorname{Proj}_{\operatorname{range}(V^{-1/2}A)}\eta.
\]
Its squared norm is
$(\widehat z-z)^\ast\mathcal I(\widehat z-z)$ and the projection has rank
$r$, proving the exact $\chi_r^2$ confidence law.

\subsection{Residualized block tests}

Partition $A=[A_{-G}\ A_G]$ and set
\[
\widetilde A=V^{-1/2}A,
\qquad
\Pi_{-G}=\operatorname{Proj}_{\operatorname{range}(\widetilde A_{-G})},
\qquad
B_G=(I-\Pi_{-G})\widetilde A_G.
\]
The range of $B_G$ is orthogonal to the nuisance range.  For
$\widetilde Y=V^{-1/2}Y$,
\[
\operatorname{Proj}_{\operatorname{range}(B_G)}\widetilde Y
=B_Gz_G+
\operatorname{Proj}_{\operatorname{range}(B_G)}\eta.
\]
Under $B_Gz_G=0$, the squared norm has a central chi-square law with
$\rank B_G$ degrees of freedom.  Under the alternative it has a noncentral
chi-square law with noncentrality $\norm{B_Gz_G}^2$.

\subsection{Exact misspecification decomposition}

Suppose the true mean is $Az+b$ with $z\in K^\perp$.  Substitution into GLS
gives
\[
\widehat z-z
=\mathcal I^\dagger A^\ast V^{-1}b
+\mathcal I^\dagger A^\ast V^{-1}\varepsilon
=b_\parallel+\text{zero-mean noise}.
\]
The noise covariance remains $\mathcal I^\dagger$, so the cross term vanishes
after expectation and
\[
\mathbb E\norm{\widehat z-z}^2
=\norm{b_\parallel}^2+\tr(\mathcal I^\dagger).
\]

Let $\Pi_A$ project onto $\operatorname{range}(V^{-1/2}A)$.  Then
\[
(I-\Pi_A)V^{-1/2}Y
=(I-\Pi_A)V^{-1/2}b+(I-\Pi_A)\eta.
\]
The projection has dimension $n-r$, and the deterministic shift has squared
norm $\norm{b_\perp}^2$.  The displayed lack-of-fit statistic is therefore
$\chi^2_{n-r}(\norm{b_\perp}^2)$.

\subsection{Transfer and action selection}

Let $r_y=y-\widehat m$.  Since
$r_y=V^{1/2}(V^{-1/2}r_y)$,
\[
\norm{Lr_y}
\le\opnorm{LV^{1/2}}\norm{V^{-1/2}r_y}
=\opnorm{LV^{1/2}}\delta_{\mathcal M}(y).
\]

On the exact confidence event,
\[
(\widehat z-z)^\ast\mathcal I(\widehat z-z)
\le\chi^2_{r,1-\delta}.
\]
For $\ell_\pi\in K^\perp=\operatorname{range}\mathcal I$, generalized
Cauchy--Schwarz yields, simultaneously for every declared action,
\[
|\ell_\pi^\top(\widehat z-z)|
\le
\sqrt{\chi^2_{r,1-\delta}}
\sqrt{\ell_\pi^\top\mathcal I^\dagger\ell_\pi}
=\rho_\pi(\delta).
\]
Hence
\begin{align*}
\Delta(\widehat\pi;z)
&\le\Delta(\widehat\pi;\widehat z)+\rho_{\widehat\pi}\\
&\le\Delta(\pi^\star;\widehat z)+\rho_{\pi^\star}\\
&\le\Delta(\pi^\star;z)+2\rho_{\pi^\star}.
\end{align*}

\subsection{Exact optimal replication}

For the saturated design, $X$ is square and invertible.  With
$W=\diag(n_a)$ and $M=X^{-1}$,
\[
\mathcal I^{-1}
=(X^{-1}W^{-1}X^{-\top})\otimes\Sigma
=(MW^{-1}M^\top)\otimes\Sigma.
\]
Therefore
\[
\tr(\mathcal I^{-1})
=\tr(\Sigma)\sum_a\frac{\norm{M_{:,a}}_2^2}{n_a}.
\]
Writing $c_a=\norm{M_{:,a}}_2^2$, Cauchy--Schwarz gives
\[
\left(\sum_a\sqrt{c_a}\right)^2
=\left(\sum_a\sqrt{\frac{c_a}{n_a}}\sqrt{n_a}\right)^2
\le\left(\sum_a\frac{c_a}{n_a}\right)N.
\]
Equality holds exactly when $n_a$ is proportional to $\sqrt{c_a}$, proving
\cref{eq:exact-replication-risk,eq:exact-a-optimal-allocation}.

For the full Boolean lattice, M\"obius inversion gives
\[
M_{T,A}=(-1)^{|T|-|A|}\mathbf1\{A\subseteq T\}.
\]
Column $A$ has $2^{m-|A|}$ nonzero entries of squared magnitude one, so
$c_A=2^{m-|A|}$.  Moreover,
\[
\sum_{A\subseteq[m]}\sqrt{c_A}
=\sum_{j=0}^m\binom mj2^{(m-j)/2}
=(1+\sqrt2)^m.
\]
Since $(1+\sqrt2)^{2m}=(3+2\sqrt2)^m$, substitution proves
\cref{eq:boolean-exact-a-optimal-allocation}.  This completes the proof of
\cref{thm:projection-information-decision}.

\subsection{Target-optimal replication}

For scalar independent configuration means, the saturated effect estimator is
$\widehat\zeta=M\bar Y$ and
\[
\operatorname{Cov}(L\widehat\zeta)
=L M\diag\!\left(\frac{\sigma_a^2}{n_a}\right)M^\top L^\top.
\]
Taking the trace and expanding by columns gives
\[
R_L(n)
=\sum_a\frac{\sigma_a^2}{n_a}\norm{LM_{:,a}}_2^2
=\sum_a\frac{c_a\sigma_a^2}{n_a}.
\]
Cauchy--Schwarz yields
\[
\left(\sum_a\sigma_a\sqrt{c_a}\right)^2
=\left(\sum_a
\frac{\sigma_a\sqrt{c_a}}{\sqrt{n_a}}\sqrt{n_a}\right)^2
\le R_L(n)N.
\]
Equality holds exactly when
$n_a\propto\sigma_a\sqrt{c_a}$, proving
\cref{eq:target-replication-risk,eq:target-optimal-allocation}.

\subsection{Exact group-randomization calibration}

Let $G$ be finite and let the null law of $Y$ be invariant under every
$g\in G$.  Conditional on an orbit, invariance makes the realized orbit point
uniform with the multiplicities induced by the group action.  Among the
$|G|$ transformed statistics, at most an $\alpha$ fraction can have an upper
rank whose normalized count is at most $\alpha$.  Therefore
\[
\Pr_0\{p_G(Y)\le\alpha\mid\operatorname{Orb}(Y)\}\le\alpha.
\]
Averaging over orbits proves unconditional super-uniformity.  Ties make the
nonrandomized upper-tail value conservative.

\subsection{Analytic diagonal sign null}

For unbounded positive diagonal geometry, the coordinate residual is zero
when $u_ig_i<0$ and has squared magnitude $u_i^2$ when $u_ig_i>0$; zero-update
coordinates carry no weight.  Conditional on fixed nonzero magnitudes and
independent fair gradient signs,
\[
R=\frac{\rho_{\rm diag,I}(u;g)^2}{\norm u^2}
=\sum_iw_iB_i,
\qquad
w_i=\frac{u_i^2}{\sum_j u_j^2},
\qquad
B_i\stackrel{\rm iid}{\sim}\operatorname{Bernoulli}(1/2).
\]
Independence gives
\[
\mathbb ER=\frac12,
\qquad
\operatorname{Var}(R)=\frac14\sum_iw_i^2.
\]
Weighted Hoeffding gives
\[
\Pr\{|R-1/2|\ge t\}
\le2\exp\left(-\frac{2t^2}{\sum_iw_i^2}\right)
=2e^{-2d_{\rm eff}t^2}.
\]
If $d_{\rm eff}\to\infty$, then $R\to1/2$ in probability.  Continuity of
$x\mapsto1-\sqrt x$ gives TGER convergence to $1-1/\sqrt2$.

%% file: appendix/d_geometry_proofs.tex
\section{Proofs}
\label{app:proofs}

\subsection{Proof of Proposition~\ref{prop:full-spd-geometricization}}

If $u=-Pg$ with $P\succ0$ and $g\neq0$, then
\[
g^\top u=-g^\top Pg<0.
\]
Conversely, suppose $g^\top u<0$ and define $b=-u$.  Then $g^\top b>0$.
Choose an orthonormal basis with $e_1=g/\norm{g}$.  In this basis write
\[
b=\beta e_1+c,\qquad c\perp e_1,
\qquad
\beta=e_1^\top b=\frac{g^\top b}{\norm g}>0.
\]
For $d=1$, set $P=[\beta/\norm g]$.  For $d>1$, define
\[
\widetilde P
=
\begin{pmatrix}
\beta/\norm g & c^\top/\norm g\\
c/\norm g & \gamma I
\end{pmatrix}
\]
in the chosen basis.  Then
\[
\widetilde P(\norm g e_1)=b.
\]
The Schur complement of the upper-left block is
\[
\gamma I-\frac{cc^\top}{\norm g\,\beta}.
\]
Choosing $\gamma>\norm c^2/(\norm g\,\beta)$ makes this complement positive,
so $\widetilde P\succ0$.  Transforming back to the original coordinates gives
an SPD matrix $P$ with $Pg=b=-u$, hence $u=-Pg$.

\subsection{Proof of Corollary~\ref{cor:critical-boundary}}

If $\dd f_\theta=0$, linearity gives $P_\theta\dd f_\theta=0$ for every
positive cometric $P_\theta$.  Thus a pure positive-geometry update is zero.
Any nonzero tangent vector at that point must enter through another module or
through a different modeling convention.

\subsection{Proof of Proposition~\ref{prop:diagonal-expressivity}}

The equation $u=-Pg$ with $P=\diag(p_1,\ldots,p_d)$ and $p_i>0$ is equivalent
coordinate-wise to
\[
u_i=-p_i g_i.
\]
If $g_i=0$, exact expression requires $u_i=0$.  If $g_i\neq0$, exact
expression requires $p_i=-u_i/g_i>0$, equivalently $u_i g_i<0$.

For the residual, the squared Euclidean objective separates:
\[
\inf_{p_i>0}\sum_i (u_i+p_i g_i)^2
=
\sum_i \inf_{p_i>0}(u_i+p_i g_i)^2 .
\]
If $g_i=0$, the term is $u_i^2$.  If $g_i\neq0$ and $u_i g_i<0$, the
positive choice $p_i=-u_i/g_i$ gives zero.  If $g_i\neq0$ and $u_i g_i>0$,
the unconstrained minimizer is negative, so the infimum over $p_i>0$ is the
boundary value $u_i^2$, approached as $p_i\downarrow0$.  If $g_i\neq0$ and
$u_i=0$, the infimum is zero, again approached as $p_i\downarrow0$, but it is
not attained by a strictly positive $p_i$.  This proves the formula and the
attainment statement.

\subsection{Bounded diagonal residual}

\begin{proposition}[Bounded diagonal residual]
\label{prop:bounded-diagonal-residual}
Fix $0<\lambda\le L<\infty$ and let
\[
\Pcal_{\rm diag}^{[\lambda,L]}
=
\{\diag(p_1,\ldots,p_d):\lambda\le p_i\le L\}.
\]
For $g,u\in\R^d$ and $h=I$,
\[
\rho_{{\rm diag}^{[\lambda,L]},I}(u;g)^2
=
\sum_{i=1}^d (u_i+p_i^\star g_i)^2,
\]
where, for $g_i\neq0$,
\[
p_i^\star
=
\min\!\left\{L,\max\!\left\{\lambda,-\frac{u_i}{g_i}\right\}\right\},
\]
and for $g_i=0$ any $p_i^\star\in[\lambda,L]$ gives the same term
$u_i^2$.
\end{proposition}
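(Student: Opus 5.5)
The plan is to exploit separability, exactly as in the proof of \cref{prop:diagonal-expressivity}.  Because $h=I$ and the family is a product of intervals, the squared residual will decouple into independent one-dimensional boxed problems.  Concretely, for $P=\diag(p_1,\ldots,p_d)$ we have $\norm{u+Pg}^2=\sum_i(u_i+p_ig_i)^2$.  The constraint set $[\lambda,L]^d$ is a Cartesian product, so
\[
\rho^2=\inf_{p\in[\lambda,L]^d}\sum_i(u_i+p_ig_i)^2=\sum_i\min_{p_i\in[\lambda,L]}(u_i+p_ig_i)^2 .
\]
Here infima become minima, since each term is continuous on a compact interval.  This attainment is the point where the bounded family differs from the open cone: the closed interval is what makes the minima exist.

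It then remains to solve each scalar problem.

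\textbf{Case $g_i=0$.}  The term equals $u_i^2$ for every $p_i\in[\lambda,L]$, so any choice of $p_i$ is optimal.

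\textbf{Case $g_i\ne0$.}  The function $\phi(p)=(u_i+pg_i)^2=g_i^2\,(p+u_i/g_i)^2$ is a strictly convex quadratic in $p$.  Its unconstrained minimizer is $p=-u_i/g_i$.  A strictly convex one-dimensional function restricted to an interval attains its minimum at the projection of its unconstrained minimizer onto that interval.  To see this, note that $\phi$ is decreasing to the left of $-u_i/g_i$ and increasing to the right.  The projection of $-u_i/g_i$ onto $[\lambda,L]$ is exactly $\min\{L,\max\{\lambda,-u_i/g_i\}\}=p_i^\star$, and strict convexity makes this minimizer unique.

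Summing the coordinate minima gives the stated formula, with the minimizing diagonal $\diag(p_1^\star,\ldots,p_d^\star)\in\Pcal_{\rm diag}^{[\lambda,L]}$.

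No real obstacle is expected.  The only care needed is to state the clipping argument through monotonicity of $\phi$ on either side of its vertex, and to note that the case $g_i=0$ yields a nonunique minimizer with the same value $u_i^2$.

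As a remark, one can also read off the coordinate residuals.  In the limit $\lambda\downarrow0$, $L\uparrow\infty$, these residuals converge to those of \cref{prop:diagonal-expressivity}, which serves as a consistency check.
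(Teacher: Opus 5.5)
Your proposal is correct and follows the paper's proof: separate the squared residual coordinatewise over the product box $[\lambda,L]^d$, note that the $g_i=0$ term is the constant $u_i^2$, and for $g_i\neq0$ clip the unconstrained minimizer $-u_i/g_i$ onto $[\lambda,L]$. Your explicit monotonicity justification of the clipping step and the $\lambda\downarrow0$, $L\uparrow\infty$ consistency check against \cref{prop:diagonal-expressivity} are sound additions.
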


\begin{proof}
The squared residual separates by coordinate:
\[
\inf_{\lambda\le p_i\le L}\sum_i (u_i+p_i g_i)^2
=
\sum_i \inf_{\lambda\le p_i\le L}(u_i+p_i g_i)^2 .
\]
If $g_i=0$, the coordinate objective is constant and equal to $u_i^2$.  If
$g_i\neq0$, the unconstrained minimizer of the one-dimensional convex
quadratic is $-u_i/g_i$.  Projecting this minimizer onto the interval
$[\lambda,L]$ gives $p_i^\star$, and substituting the coordinate minimizers
gives the stated formula.
\end{proof}

\subsection{Proof of Proposition~\ref{prop:block-expressivity}}

For a block-diagonal $P$, the equation $u=-Pg$ separates over blocks:
\[
u_{B_j}=-P_{B_j}g_{B_j}.
\]
If $g_{B_j}=0$, exact expression requires $u_{B_j}=0$.  If $g_{B_j}\neq0$,
\cref{prop:full-spd-geometricization} applied inside the block gives
existence of $P_{B_j}\succ0$ exactly when
$g_{B_j}^\top u_{B_j}<0$.  Combining the block solutions gives the desired
block-diagonal matrix.

%% file: appendix/e_experiment_details.tex
\section{Diagnostic Prototype Details}
\label{app:diagnostic-details}

\subsection{Controlled factorial closure}

The controlled experiment is generated from existing scikit-learn digits data
by
\begin{verbatim}
python -u experiments/optimizer_calculus/interaction_curvature_experiment.py `
  --output-dir experiments/optimizer_calculus/interaction-curvature-results
\end{verbatim}
The complete configuration is
\path{experiments/optimizer_calculus/interaction_curvature_config.json}.
The paper-supplied mirror under \path{artifacts/interaction_curvature} contains
the final configuration snapshot, Boolean responses, pair-curvature checks,
continuous held-out checks, split robustness, allocations, Gaussian campaign
summaries, all 4500 confirmatory empirical observations, bootstrap intervals,
finite-response traces, and the consolidated summary.

The dataset is restricted to digits 3 and 8 and stratified into 249 training
and 108 test examples.  Standardization is fit on the training data only.  An
intercept gives 65 hidden coordinates, all ridge-regularized with
$\lambda=0.15$.  The three additive losses emphasize positive-class samples,
negative-class samples, and central-occlusion robustness with scale $0.75$.
Newton solves use analytic gradients and Hessians, Armijo damping, tolerance
$10^{-11}$, and at most 80 iterations.  Pair curvature uses order-24
Gauss--Legendre quadrature; eight additional splits use order 16.

The Gaussian pilot allocates 420 main and 80 held-out replications.  Exact
noncentral-$\chi^2$ power calibration multiplies both by the first integer
reaching 0.95 power, which is nine.  Confirmatory simulation uses 100,000
campaigns.  The empirical channel uses minibatches of size 1024, 200 pilot
observations per mask, a disjoint confirmatory sample of 3780 main and 720
held-out observations, and 20,000 stratified bootstrap draws.  The pilot is
used only for variance estimation and allocation.

Finite-response tracking runs fixed-step gradient descent for horizons
$0,1,2,4,8,16,32,64,128,256,512,1024$.  At every horizon the script verifies
that the actual pair-effect error is below the sum of the four optimized-value
gaps, and that this exact gap bound is below the analytic strong-convexity
bound.

The verification commands are
\begin{verbatim}
python -m pytest `
  experiments/optimizer_calculus/test_interaction_curvature_experiment.py -q

python experiments/optimizer_calculus/verify_interaction_curvature_results.py `
  newpapers/03-geometric-nongeometric-optimizer-calculus/artifacts/interaction_curvature
\end{verbatim}
They return \texttt{6 passed} and
\texttt{interaction-curvature result audit passed}, respectively.

\subsection{Quadratic benchmark}

The deterministic quadratic benchmark uses three problem families:
\begin{enumerate}[leftmargin=*,itemsep=2pt]
\item rotated strongly convex quadratics;
\item diagonal-plus-low-rank quadratics;
\item block-coupled quadratics.
\end{enumerate}
The script records final objective gap, gradient-call count, and convergence
under a gradient-norm tolerance.  The default repository command is
\begin{verbatim}
python experiments/toy/gng_optimizer_benchmark.py --output experiments/toy/gng-results
\end{verbatim}
and the larger diagnostic run used during development was
\begin{verbatim}
python experiments/toy/gng_optimizer_benchmark.py --output experiments/toy/gng-results `
  --dim 64 --instances 4 --memory 8 --max-iter 220 --grad-tol 1e-8
\end{verbatim}
The paper reports the later optimized diagnostic checks because they use
\textsc{GNG-FullMetricProbe} v2, which removes an unnecessary final gradient
validation call and therefore uses $d+1$ gradient calls (65 when $d=64$), as
recorded in \path{experiments/toy/gng-results-optimized-64x1/gng_optimizer_summary.json}.
The saved diagnostic summary also contains \textsc{HeavyBall-oracle}; its poor
performance on these high-condition-number instances should not be read as a
general statement about heavy-ball methods.  The \textsc{Adam-tuned} entry in
that summary is selected from a six-point learning-rate sweep, whose tuning
cost is separate from the recorded optimizer-call count.

\subsection{Prototype descriptions}

\paragraph{GNG-FullMetricProbe.}
The method evaluates $g_0=\nabla f(x_0)$ and then evaluates gradients at
$x_0+e_i$ for $i=1,\ldots,d$.  For a deterministic quadratic, these probes
recover the columns of $H$.  The update $x_+=x_0-H^{-1}g_0$ is the exact
Newton step.  The final gradient can be computed internally as
$g_0+H(x_+-x_0)$, so no final oracle call is needed.

\paragraph{GNG-KrylovMetric.}
The method performs sequential directional probes to obtain $Hp$ along
search directions.  It uses exact quadratic line search and conjugacy-style
updates.  Its interpretation in this paper is a growing Krylov subspace
metric, not a new claim about conjugate-gradient theory.
The reported implementation is not presented as a standard CG baseline.

\paragraph{RSDLR-BFGS-Hybrid.}
The method combines a bounded diagonal inverse metric with a limited secant
memory.  It keeps a mixture of recent secant pairs and pairs with high
residual under the current diagonal model.  In the present benchmark it did
not outperform FIFO-LBFGS.

\subsection{Formal trace audit}

The trace-level residual audit in \cref{sec:experiments} is produced
by the scripts under \texttt{experiments/lane\_audit}.  Each worker runs
\texttt{optimizer\_trace.py}, records the raw gradient before the optimizer
step, applies the optimizer, records the parameter update, and writes
\texttt{trace\_residual\_summary.json}, \texttt{trace\_residuals.csv}, and a
GER plot.  In the rich runs, each worker also writes bounded-diagonal
sensitivity metrics, alternative visible-covector metrics, layer-scalar
metrics, penalized layer-scalar path-fit curves, partial CSV files,
progress JSON, and explicit checkpoints.  The queued batch is launched by
\texttt{lane\_orchestrator.py}.  The paper tables and figures are generated
by:
\begin{verbatim}
python experiments/lane_audit/analyze_formal_run.py `
  --run-dir tmp/remote_runs/paper03_rich_family_b1024_l28_20260710_2223 `
  --output-dir newpapers/03-geometric-nongeometric-optimizer-calculus/figures `
  --prefix family_sensitivity

python experiments/lane_audit/analyze_formal_run.py `
  --run-dir tmp/remote_runs/paper03_rich_cifar10_b1024_l28_20260710_2223 `
  --output-dir newpapers/03-geometric-nongeometric-optimizer-calculus/figures `
  --prefix cifar10_audit

python experiments/lane_audit/analyze_paper03_rich_runs.py `
  --run-dir tmp/remote_runs/paper03_rich_family_b1024_l28_20260710_2223 `
            tmp/remote_runs/paper03_rich_cifar10_b1024_l28_20260710_2223 `
  --output-dir newpapers/03-geometric-nongeometric-optimizer-calculus/figures `
  --prefix paper03_rich
\end{verbatim}

Both formal batches use lane count $28$, batch size $1024$, checkpoint and
partial-write interval $256$, and $4096$ recorded optimizer steps per job.
The formal jobs did not download datasets on the remote hosts.  The bounded
diagonal family is $A_{ii}\in[10^{-6},1]$ unless the sensitivity grid names a
different lower or upper bound.  Raw TGER is the $\ell_2$ trace explanation
rate when each step chooses its own map.  Coherent TGER is the no-variation
$B_{\rm geo}=0$ specialization: one shared map is fit to the whole trace.  The
coherent residual is computed by the closed-form streaming accumulator in
\texttt{optimizer\_trace.py}; it solves the same least-squares problem as
stacking all updates and gradients while avoiding full trace-vector storage
in host memory.  In the audit script, the names \texttt{muon} and
\texttt{gng\_muon} both call the local \textsc{GNGMuon} implementation with
different hyperparameter settings.  They are retained as implementation
labels for the saved summaries, but they should not be interpreted as
official reference implementations of external Muon optimizers.

The family-sensitivity rich run is
\texttt{paper03\_rich\_family\_b1024\_l28\_20260710\_2223}.  It uses datasets
\texttt{mnist,fashion\_mnist}, optimizers
\texttt{adamw,sophia,muon,gng\_muon}, seeds \texttt{7,8,9}, width $32$,
train subset $8192$, and test subset $2048$.  It contains $24$ jobs,
$98304$ optimizer steps, and approximately $1.01\times10^8$ sample views.  It
ran on one NVIDIA RTX PRO 6000 Blackwell Server Edition GPU; the batch
monitor recorded $2990$ seconds of wall time, mean GPU utilization $99.5\%$,
peak GPU utilization $100\%$, and peak VRAM $33015$ MiB.

The CIFAR-10 rich run is
\texttt{paper03\_rich\_cifar10\_b1024\_l28\_20260710\_2223}.  It uses dataset
\texttt{cifar10}, the full $50000/10000$ train/test split, width $48$,
optimizers \texttt{adamw,sgd,lion,sophia,muon,gng\_muon}, and seeds
\texttt{7,8,9,11}.  It contains $24$ jobs, $98304$ optimizer steps, and
approximately $1.01\times10^8$ sample views.  It ran on one NVIDIA RTX PRO
6000 Blackwell Server Edition GPU; the batch monitor recorded $3554$ seconds
of wall time, mean GPU utilization $95.7\%$, peak GPU utilization $100\%$,
and peak VRAM $53373$ MiB.

\subsection{Follow-up trace audits}

The runtime-constrained follow-up tables in
\cref{tab:followup-sector-geometry,tab:followup-interventions,tab:followup-tinyresnet-geometry}
are generated by:
\begin{verbatim}
$SECTOR="tmp/remote_runs/<sector-followup-run>"
$TINY="tmp/remote_runs/<tinyresnet-followup-run>"
$OUT="newpapers/03-geometric-nongeometric-optimizer-calculus/figures"

python experiments/lane_audit/analyze_paper03_followup_runs.py `
  --sector-run-dir $SECTOR `
  --tiny-run-dir $TINY `
  --output-dir $OUT `
  --prefix paper03_followup
\end{verbatim}
The analyzer checks that the sector run has $30$ optimizer/seed rows, the
tiny-ResNet run has $18$ rows, the final short-horizon step counts are
$128$ and $96$, and all reported accuracy, bounded-diagonal, channel-scalar,
layer-scalar, coherent, and alternative-covector metrics are present.

\begin{table}[t]
\centering
\caption{Synthetic residual sanity check used before the neural-network
follow-up audits.  Raw TGER allows the explaining geometry to vary by step;
coherent TGER uses one shared explaining map across the trace.}
\label{tab:appendix-followup-synthetic-sanity}
\small
\input{figures/paper03_followup_synthetic_sanity_table.tex}
\end{table}

The sector/intervention run is:
\begin{center}
\small\path{paper03_followup_sector_cifar_b1024_l28_20260711_0242}
\end{center}
It uses CIFAR-10 with the full $50000/10000$ split, width $48$, the
\texttt{small\_conv} model, batch size $1024$, lane count $28$, seeds
$7,8,9$, and ten optimizer settings: AdamW, AdamW/no weight decay,
SGD/Nesterov, SGD/no momentum, Lion, Lion/no sign, Sophia,
Adafactor-style, Muon-style, and GNG-Muon.
Each job records $128$ optimizer steps with partial trace writes and
checkpoints every $32$ steps.  The batch contains $30$ jobs and $3840$
recorded optimizer steps.  Its monitor recorded $2569$ seconds of batch wall
time, mean GPU utilization $91.4\%$, peak GPU utilization $100\%$, and peak
VRAM $58757$ MiB.

The tiny-ResNet follow-up run is:
\begin{center}
\small\path{paper03_followup_tinyresnet_cifar_b1024_l28_20260711_0242}
\end{center}
It uses CIFAR-10 with the full $50000/10000$ split, width $24$, the
\texttt{tiny\_resnet} model, batch size $1024$, lane count $28$, seeds
$7,8,9$, and six optimizer settings: AdamW, SGD/Nesterov, Lion, Sophia,
Adafactor-style, and Muon-style.  Each job records $96$ optimizer steps with
partial trace writes and checkpoints every $32$ steps.  The batch contains
$18$ jobs and $1728$ recorded optimizer steps.  Its monitor recorded $2482$
seconds of batch wall time, mean GPU utilization $99.7\%$, peak GPU
utilization $100\%$, and peak VRAM $49499$ MiB.

The follow-up campaign initially tested longer horizons, but calibration with
the channel-scalar and coherent covector diagnostics projected runtimes beyond
the intended two-hour-per-task budget.  The final reported follow-up runs
therefore keep the full optimizer/seed/geometry/covector matrix and shorten
the trace horizon.  They are used as mechanism and model-extension audits,
not as replacement performance benchmarks for the longer rich trace runs.